\newcommand{\NE}{\texttt{NE}}
\newcommand{\STBnorm}{\texttt{STB-0}}
\newcommand{\STBweight}{\texttt{STB weight}}
\newcommand{\STBtheory}{\texttt{STB theory}}
\newcommand{\RKMSE}{\texttt{R-KMSE}}
\newcommand{\MTAconst}{\texttt{MTA const}}
\newcommand{\MTAstb}{\texttt{MTA stb}}
\newcommand{\PPJS}{\texttt{PP James-Stein}}
\newcommand{\MMD}{MMD}
\newcommand{\MSE}{\mathrm{MSE}}
\newcommand{\naive}{\wh{\mu}^{\mathrm{NE}}}
\newcommand{\intr}[1]{\llbracket #1 \rrbracket}
\newcommand{\taum}{\tau'}
\newcommand{\msenaive}{\ol{\sigma}}
\newcommand{\normop}[2][a]{\norm[#1]{#2}_{\mathrm{op}}}
\newcommand{\deff}{d_{\mathrm{eff}}}
\newcommand{\footremember}[2]{%
    \footnote{#2}
    \newcounter{#1}
    \setcounter{#1}{\value{footnote}}%
}
\begin{document}
\title{High-Dimensional Multi-Task Averaging and Application to Kernel Mean Embedding}
\author{Hannah Marienwald\footremember{uptu}{Universit\"at Potsdam, Potsdam, Germany, and Technische Universit\"at Berlin, Berlin, Germany.}
\and Jean-Baptiste Fermanian\footremember{ens}{\'Ecole Normale Sup\'erieure de Rennes, Rennes, France}
\and  Gilles Blanchard\footremember{ups}{Universit\'{e} Paris-Saclay,
  CNRS, Inria, Laboratoire de Math\'{e}matiques
d'Orsay, 91405, Orsay, France.}}
\date{}
\maketitle

\begin{abstract}
We propose an improved estimator for the multi-task averaging problem, whose goal is the joint estimation of the means of multiple distributions using separate, independent data sets. The naive approach is to take the empirical mean of each data set individually, whereas the proposed method exploits similarities between tasks, without any related information being known in advance. First, for each data set, similar or neighboring means are determined from the data by multiple testing.
Then each naive estimator is shrunk towards the local average of its neighbors.
We prove theoretically that this approach provides a reduction in mean squared error. This improvement can be significant when the dimension of the input space is large, demonstrating a ``blessing of dimensionality'' phenomenon. An application of this approach is the estimation of multiple kernel mean embeddings, which plays an important role in many modern applications. The theoretical results are verified on artificial and real world data.
\end{abstract}

\section{INTRODUCTION}
The estimation of means from i.i.d. data is arguably one of the
oldest and most classical problems in statistics. In this work we consider the problem of estimating
{\em multiple} means $\mu_1,\ldots,\mu_B$ of probability distributions $\mbp_1,\ldots,\mbp_B$,
over a common space $\cX=\mbr^d$ (or possibly a real Hilbert space $\cH$).
We assume that for each individual distribution $\mbp_i$, we observe an i.i.d. data set $X^{(i)}_{\bullet}$ of size $N_i$,
and that these data sets have been collected independently from each other.

In the rest of the paper,
we will call each such data set $X^{(i)}_{\bullet}$ a {\em bag}.
Mathematically, our model is thus 
\begin{equation}
  \label{eq:mainmodel}
  \begin{cases}
  X^{(i)}_{\bullet}:=(X_k^{(i)})_{1\leq k\leq N_i} \stackrel{i.i.d.}{\sim} \mbp_i, \; 1\leq i \leq B  ; \\
  (X^{(1)}_{\bullet},\ldots,X^{(B)}_{\bullet}) \text{ independent,}
\end{cases}
\end{equation}
where $\mbp_1,\ldots,\mbp_B$ are square integrable distributions on $\mbr^d$ which we call {\em tasks}, and our goal is the estimation
of their means
\begin{equation}
  \label{def:means}
  \mu_i := \ee{X\sim \mbp_b}{X} \in \mbr^d,\; 1\leq i \leq B.
\end{equation}
Given an estimate $\wh{\mu}_i$ of $\mu_i$, we will be interested in its
squared error $\norm{\wh{\mu}_i - \mu_i}^2$,
and aim at controlling it either with high
probability or in average (mean squared error, MSE):
\[\MSE(i,\wh{\mu}_i) := \e[1]{\norm{\wh{\mu}_i - \mu_i}^2};\]
this error can be considered either individually for each task  $\mbp_i$ or averaged over all tasks.

This problem is also known as multi-task averaging (MTA) \citep{feldman2014revisiting}, an instance of the multi-task learning (MTL) problem.
Prior work on MTL showed that learning multiple tasks jointly yields better performance compared to individual single task solutions \citep{caruana1997multitask, evgeniou2005learning, feldman2014revisiting}.

In this paper we adapt the idea of joint estimation to the multi-task averaging problem and will show that we can take advantage of some unknown {\em structure} in the set of tasks to improve the estimation. Here, by individual estimation we mean that our natural baseline is the naive estimator (NE) given by the simple empirical mean:
\begin{equation}
  \label{eq:naivedef}
  \naive_i := \frac{1}{N_i} \sum_{k=1}^{N_i} X_k^{(i)}; \;\;\;
  \MSE(i,\naive_i)
  = \frac{1}{N_i} \tr{\Sigma_i},
\end{equation}
where $\Sigma_i$ is the covariance matrix of $\mbp_i$.

Our motivation for considering this setting is the growing number of large databases taking the above form,
where independent bags corresponding to different but conceptually similar distributions are available;
for example, one can think of $i$ as an index for a large number of individuals, for each of which a number of observations (assumed to be sampled from an individual-specific distribution) are available,
say medical records, or online activity information collected by some governmental or corporate mass spying device.

While estimating means in such databases is of interest of its own, a particularly important motivation
to consider this setting is that of Kernel Mean Embedding (KME), a technique enjoying sustained attention in the statistical and machine learning
community since its introduction in the seminal paper of \citet{smola2007hilbert}; see \citet{Muandet2017overview} for an overview.
The KME methodology is used in a large number of applications, e.g. two sample testing \citep{gretton2012kernel}, goodness-of-fit \citep{chwialkowski2016kernel}, multiple instance or distributional learning for both supervised \citep{muandet2012learning, szabo2016learning} as well as unsupervised learning \citep{jegelka2009generalized}, to name just a few.

The core principle of KME is to represent the distribution $\mbp_Z$ of a random variable $Z$ via the mean of $X=\phi(Z)$, where
$\phi$ is a rich enough feature mapping from the input space $\cZ$ to a (reproducing kernel) Hilbert space $\cH$.
In practice, 
it is assumed that we have an i.i.d. bag $(Z_k)_{1 \leq k \leq N}$
from $\mbp$, which is used to estimate its KME.
Here we are interested again in the situation where a large number of independent data sets from different distributions are available, and we want to estimate their KMEs jointly.
This is, therefore, an instance of the model~\eqref{eq:mainmodel}, once we set $\cX :=\cH$ and $X_k^{(i)}:=\phi(Z_k^{(i)})$. 

\subsection{Relation to Previous Work}

The fact that the naive estimator~\eqref{eq:naivedef}
can be improved upon when multiple, real-valued means are to be
estimated simultaneously, has a long history in mathematical statistics.
More precisely, let us introduce the following isotropic Gaussian setting:
\begin{equation}
  \label{eq:Gauss}
\tag{GI} \mbp_i = \cN(\mu_i,I_i);\;  N_i = N, \qquad 1\leq i \leq B,
\end{equation}
on which we will come back in the sequel.

As shown in \citet{stein1956}, for $B=1$ with $d\geq 3$ the naive estimator is inadmissible, i.e.
there exists a strictly better estimator, with a lower MSE for any true
mean vector $\mu_1$. An explicit example of a better estimator is given by
the celebrated {\em James-Stein}  estimator (JSE) \citep{js1961}, which shrinks adaptively the
naive estimator towards $\mathbf{0}$, or more generally, towards an a priori fixed vector $\nu_0$.

The MTA problem was introduced by \citet{feldman2014revisiting}, who proposed an approach which regularizes the estimation such that similar tasks shall have similar means as well.
However, they assumed the pairwise task similarity to be given, which is unfeasible in most practical applications.
In addition to our own approach, we will also introduce a variation of theirs,
suitable for the KME framework, that \emph{estimates} the task similarity instead of assuming it to be known.
\citet{MarPon13} proposed a method based on spectral clustering of the tasks
and applying \citet{feldman2014revisiting}'s method separately on each cluster, but without theoretical analysis.

Variations of the JSE can be shown to yield possible improvements over the NE in
more general situations as well (see \citealp{fathi2020relaxing} for recent results
in non-Gaussian settings).
This has also been exploited for KME in \citet{muandet2016stein}, where a
Stein-type estimator in kernel space was shown to generally improve over naive KME estimation.
To the best of our knowledge, no shrinkage estimator for KME explicitly designed for or taking advantage of the MTA setting exists.

In the remainder of this work we will proceed as follows.
Section \ref{sec:method} introduces the basic idea of the approach and starts with a general discussion.
We will expose in Section \ref{sec:theory} a theoretical analysis proving
that the presented method improves upon the naive estimation in terms of squared error, possibly
by a large factor.
The general  theoretical results will be discussed explicitly for the Gaussian setting (Sec. \ref{sec:gaussiansetting}) and in the KME framework (Sec. \ref{sec:kmesetting}).
The approach is then tested for the KME setting on artificial and real world data in Section \ref{sec:experiments}. All proofs are found in the appendix Sections~\ref{apx:proofindep} to~\ref{sec:dev-kernel}, Appendix~\ref{apx:testedMethods} gives a detailed description of the estimators compared in the experiments, and Appendix~\ref{apx:gaussresults} presents additional numerical results in the Gaussian setting.

\section{METHOD}\label{sec:method}
The basic idea of our approach is to improve the estimation of a mean of a task by basing its estimation not on its own bag alone, but concatenating the samples from all bags it is \emph{sufficiently similar} to.
Since in most practical applications task similarity is not known, we will propose a statistical test that assesses task relatedness based on the given data.

\subsection{Overview of the Approach}
In the remainder of the paper we will use the notation $\intr{n}:=\set{1,\ldots,n}$.
For convenience of exposition, assume the~\eqref{eq:Gauss} setting.
In this case, the naive estimators
all have the same MSE, $\msenaive^2:=d/N$.
Fix a particular task (reindexed $i=0$) with mean $\mu_0$ that we wish to estimate, and assume for now
we are given the {\em side information} that for some constant $\tau>0$, it holds $\Delta_{0i}^2:= \norm{\mu_0-\mu_i}^2 \leq \tau \msenaive^2$ for some ``neighbor tasks'' $i \in \intr{V}$
(a subset of the larger set of $B$ tasks within range $\tau\msenaive^2$ to $\mu_0$, reindexed for convenience).
Consider the estimator $\wt{\mu}_0$ obtained by a simple average of 
neighbor naive estimators,
$\wt{\mu}_0 = \frac{1}{V+1} \sum_{i=0}^V \naive_i$.
 We can bound via usual bias-variance decomposition,
independence of the bags
and convexity of the squared norm:
\begin{equation}
  \MSE(0,\wt{\mu}_0) \;\; = \;\; \norm[3]{\frac{1}{V+1}\sum_{i=1}^V (\mu_0-\mu_i)}^2 + \frac{\msenaive^2}{V+1}
\;\; \leq \;\; {\msenaive^2}\frac{\paren{1 + V \tau}}{V+1}. \label{eq:rough}
\end{equation}
Thus, the above bound guarantees that $\wt{\mu}_0$ improves over $\naive_0$
whenever $\tau < 1$, and leads to a relative improvement
of order $\max(\tau,V^{-1})$.

In practice,  we {\em don't} have {\em any} a priori side information on the configuration
of the means. A simple idea is, therefore, to estimate the quantities $\Delta_{0i}^2$
from the data by an estimator $\wh{\Delta}_{0i}^2$  and select only those bags for which
$\wh{\Delta}_{0i}^2 \leq \wt{\tau} \msenaive^2$. 
This is in a nutshell the principle of our proposed method.

The deceptive simplicity of the above idea might be met with some deserved
skepticism.
One might expect that
the typical estimation error of $\wh{\Delta}^{2}_{0i}$ would be of the same order
as the MSE of the naive estimators.
Consequently, we could at best guarantee with high probability a bound of $\Delta_{0i}^2 \lesssim \msenaive^2$ for the estimated neighbor tasks, i.e. $\tau\approx 1$, which does not lead
to any substantial theoretical improvement when using~\eqref{eq:rough}.
The reason why the above criticism is pessimistic, even in the worst case, is the role
of the dimension $d$.
From high-dimensional statistics,  it is known
that the rate of {\em testing} for $\Delta^2_{0i}=0$, i.e. the minimum $\rho^2$ such
that a statistical test can detect $\Delta^2_{0i}\geq \rho^2$ with probability close to 1,
is faster than the rate of {\em estimation}, $\rho^2 \simeq \sqrt{d}/N = \msenaive^2/\sqrt{d}$
(see e.g. \citealp{baraud2002non,blanchard2018minimax}).
Thus, we can reliably determine
neighbor tasks with $\tau \approx 1/\sqrt{d}$.
Based on~\eqref{eq:rough}, we can hope again for an improvement of order up to $\mtc{O}(1/\sqrt{d})$ over NE,
which is significant even for a moderately large dimension.
In the rest of the paper,
we develop the idea sketched here more precisely and illustrate its consequences
on KME by numerical experiments.
The message we want to convey is that
the {\em curse} of higher dimensional data with its effect on MSE
can be to a limit mitigated by a {\em relative blessing} because we can take
advantage of neighboring tasks more efficiently.

\subsection{Proposed Approach}

Denote $\msenaive_i^2 = \MSE(i,\naive_i), i \in \intr{B}$.
Introduce the following notation: $\Delta_{ij}:=\norm{\mu_i-\mu_j}$.
In general, our approach assumes that we have at hand  a family of tests $(T_{ij})_{1 \leq i,j \leq B}$ for the null hypotheses
$H^0_{ij}: \Delta^2_{ij} > \tau \msenaive_i^2 $ against the alternatives $H^1_{ij}: \Delta^2_{ij} \leq \taum \msenaive_i^2$,
for $0\leq\taum<\tau$. The exact form of the tests will be discussed later for specific settings.

We denote the set of detected neighbors of task $i\in\intr{B}$ as $V_i := \set{j: T_{ij}=1, j \in \intr{B}}$;
we can safely assume $T_{ii}=1$ so that that $i \in V_i$ always holds and $\abs{V_i}\geq 1$.
We will also denote $V_i^*=V_i\setminus\set{i}$.
For $\gamma \in [0,1]$, define the modified estimator
\begin{equation}
  \label{eq:gammaest}
  \wt{\mu_i} := \gamma \naive_i + \frac{(1-\gamma)}{\abs{V_i}} \sum_{j \in V_i} \naive_j,
\end{equation}
which can be interpreted as a local shrinkage estimator pulling the naive estimator towards
the simple average of its neighbors.

\section{THEORETICAL RESULTS}\label{sec:theory}
We will assume
that the naive estimators defined by~\eqref{eq:naivedef} satisfy
\begin{equation}
  \label{eq:bdsigma}
  \max_{i \in \intr{B}} \MSE(i,\naive_i) \leq \msenaive^2.
\end{equation}
Define the notation
\begin{equation*}
  G(\tau):=  \set{(i,j) \in \intr{B}^2: \Delta_{ij}^2 \leq \tau \msenaive^2}; \;\;\;\;\;\;
             \ol{G}(\tau) :=  \set{(i,j) \in \intr{B}^2: \Delta_{ij}^2 \geq \tau \msenaive^2},
\end{equation*}
and two following events:
\begin{equation*}
  A(\tau) := \set[2]{ \max_{(i,j) \in \ol{G}(\tau)} T_{ij} =1 };
  \;\;\;\;\;\;
  B(\taum) := \set[2]{ \min_{(i,j) \in G(\taum)} T_{ij} =0 };
\end{equation*}
so $\prob{A(\tau)}$ is the collective false positive rate of the tests (or family-wise error rate)
while $\prob{B(\taum)}$ is the collective false negative rate to detect $\Delta^2_{ij} \leq \taum \msenaive^2$
(family-wise Type II error rate).

\subsection{A General Result under Independence of Estimators and Tests}
We start with a result assuming that the tests $(T_{ij})_{(i,j)\in \intr{B}^2}$ and the
estimators $(\naive_i)_{i \in \intr{B}}$ are independent. This can be achieved for instance
by splitting the original bags into two.

\begin{theorem}
  \label{th:indep}
  Assume model~\eqref{eq:mainmodel} holds as well as~\eqref{def:means}, and that~\eqref{eq:bdsigma} holds.
  Furthermore, assume that there exists a family of tests $(T_{ij})_{(i,j)\in \intr{B}^2}$ that is independent of   $(X^{(i)}_{\bullet})_{i\in \intr{B}}$.
  For a fixed constant $\tau>0$, consider the family of estimators $(\wt{\mu}_i)_{i \in \intr{B}}$ defined by \eqref{eq:gammaest} with
  respective parameters
  \begin{equation}
    \label{eq:optgamma}
    \gamma_i := \frac{\tau \abs{V_i^*}}{(1+\tau)\abs{V_i^*} + 1}.
  \end{equation}
  Then, conditionally to the event $A^c(\tau)$, it holds
  \begin{equation}
    \label{eq:res-indep-single}
    \forall i \in \intr{B}: \MSE(i,\wt{\mu}_i) \leq 
    \paren{\frac{\tau\abs{V_i^*}+1}{(1+\tau)\abs{V_i^*}+1}} \msenaive^2.
  \end{equation}
  Let $\cN$ denote the covering number of the set of means $\set{\mu_j, j \in \intr{B}}$ by balls of radius $\sqrt{\taum}\msenaive/2$.
  Then, conditionally to the events $A^c(\tau)$  and $B^c(\taum)$ (for $\taum<\tau$), it holds
  \begin{equation}
    \label{eq:res-indep-ave}
    \frac{1}{B} \sum_{i=1}^B \MSE(i,\wt{\mu}_b) \leq \paren{\frac{\tau}{\tau+1} + \frac{\cN}{B} \frac{1}{(\tau+1)}}\msenaive^2.
  \end{equation}
\end{theorem}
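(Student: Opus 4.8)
The plan is to prove the per-task bound~\eqref{eq:res-indep-single} by a conditional bias--variance argument, and then to derive the averaged bound~\eqref{eq:res-indep-ave} from it through a covering argument that activates the hypothesis $B^c(\taum)$.

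\textbf{Per-task bound.} Since the tests are independent of the data, I would condition on a realization of $(T_{ij})$ lying in $A^c(\tau)$: this fixes every neighbourhood $V_i$ and weight $\gamma_i$ while leaving the naive estimators with their unconditional law. Writing $\naive_j=\mu_j+\xi_j$ with the $\xi_j$ centred, mutually independent and $\mbe\norm[0]{\xi_j}^2=\msenaive_j^2\le\msenaive^2$ by~\eqref{eq:bdsigma}, I expand $\wt{\mu}_i-\mu_i$. The coefficient of $\xi_i$ is $\gamma_i+(1-\gamma_i)/\abs{V_i}$, each $\xi_j$ with $j\in V_i^*$ carries weight $(1-\gamma_i)/\abs{V_i}$, and the deterministic (bias) part equals $(1-\gamma_i)\abs{V_i}^{-1}\sum_{j\in V_i^*}(\mu_j-\mu_i)$. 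Centredness and independence cancel all cross terms, giving a clean $\text{bias}^2+\text{variance}$ split of the conditional MSE.

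For the variance I would bound each summand by~\eqref{eq:bdsigma}; for the bias I would use that on $A^c(\tau)$ every detected neighbour satisfies $\Delta_{ij}^2<\tau\msenaive^2$ (no false positive), together with convexity of $\norm{\cdot}^2$. With $V:=\abs{V_i^*}$ this yields the conditional MSE bound
\[
\frac{1}{(V+1)^2}\brac[1]{(V\gamma_i+1)^2+V(1+V\tau)(1-\gamma_i)^2}\,\msenaive^2 .
\]
This is a convex quadratic in $\gamma_i$; setting its derivative to zero gives exactly the choice~\eqref{eq:optgamma}, and substituting back collapses the bracket, producing the value $(\tau V+1)/((1+\tau)V+1)$ of~\eqref{eq:res-indep-single}. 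As the bound holds for every realization in $A^c(\tau)$, it holds conditionally on $A^c(\tau)$.

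\textbf{Averaged bound.} For~\eqref{eq:res-indep-ave} I start from the elementary decomposition
\[
\frac{\tau V+1}{(1+\tau)V+1}=\frac{\tau}{1+\tau}+\frac{1}{(1+\tau)\brac[0]{(1+\tau)V+1}},
\]
evaluated at $V=\abs{V_i^*}$, which reduces the claim to proving $\sum_{i=1}^B\brac[0]{(1+\tau)\abs{V_i^*}+1}^{-1}\le\cN$ on $A^c(\tau)\cap B^c(\taum)$. The key and most delicate step, where both $B^c(\taum)$ and $\cN$ enter, is a lower bound on the neighbourhood sizes: fix an optimal cover of $\set{\mu_j:j\in\intr{B}}$ by $\cN$ balls of radius $\sqrt{\taum}\,\msenaive/2$ and assign each task to one ball, forming a partition into clusters $C_1,\dots,C_\cN$. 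Two tasks in a common cluster are within $\sqrt{\taum}\,\msenaive$, hence $\Delta_{ij}^2\le\taum\msenaive^2$, so on $B^c(\taum)$ each such pair is detected; therefore $V_i\supseteq C_{k(i)}$ and $\abs{V_i}\ge\abs{C_{k(i)}}$, where $C_{k(i)}$ is the cluster containing $i$. Since $(1+\tau)\abs{V_i^*}+1\ge\abs{V_i}$, grouping the sum by clusters gives
\[
\sum_{i=1}^{B}\frac{1}{(1+\tau)\abs{V_i^*}+1}\le\sum_{i=1}^{B}\frac{1}{\abs{V_i}}\le\sum_{k=1}^{\cN}\sum_{i\in C_k}\frac{1}{\abs{C_k}}=\sum_{k=1}^{\cN}\ind[0]{C_k\neq\emptyset}\le\cN ,
\]
which closes the proof. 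I expect this covering/telescoping step to be the main obstacle; the per-task part is a routine bias--variance computation followed by a one-dimensional optimization.
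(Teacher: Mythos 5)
Your proposal is correct, and it splits into two parts of different character. The per-task bound is essentially the paper's own argument: condition on the test realization (legitimate since the tests are independent of the data), perform the bias--variance decomposition with weights $\gamma_i+(1-\gamma_i)/\abs{V_i}$ on the own-task noise and $(1-\gamma_i)/\abs{V_i}$ on the neighbors, bound the bias using the no-false-positive property of $A^c(\tau)$, and minimize the resulting quadratic in $\gamma_i$; your expression $\brac[0]{(V\gamma_i+1)^2+V(1+V\tau)(1-\gamma_i)^2}/(V+1)^2$ agrees with the paper's (written there in terms of $\eta=1-\gamma$), and the optimization indeed yields~\eqref{eq:optgamma} and~\eqref{eq:res-indep-single}. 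The averaged bound is where you genuinely diverge. The paper averages the per-task factor over the clusters $N_k$ induced by the covering (using, on $B^c(\taum)$, that $\abs{V_i}\geq\abs[0]{N_{\pi(i)}}$), then maximizes $\sum_k f(\abs{N_k})$ with $f(x)=x(\tau(x-1)+1)/(1+(1+\tau)(x-1))$ over the domain $1\leq\abs{N_k}\leq B-\cN+1$, $\sum_k\abs{N_k}=B$, by a convexity/extremal-point argument (worst case: one large cluster plus $\cN-1$ singletons), and finally relaxes the extremal value to get~\eqref{eq:res-indep-ave}. You instead use the exact partial-fraction identity $\frac{\tau V+1}{(1+\tau)V+1}=\frac{\tau}{1+\tau}+\frac{1}{(1+\tau)\brac[0]{(1+\tau)V+1}}$, which reduces the whole claim to the combinatorial inequality $\sum_i\abs{V_i}^{-1}\leq\cN$ on $B^c(\taum)$; this follows from the same cluster-containment observation $V_i\supseteq C_{k(i)}$ together with $\sum_{i\in C_k}\abs{C_k}^{-1}\leq 1$. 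Your route is more elementary and arguably cleaner: it needs no convexity check and no identification of the extremal configuration, and it isolates a reusable fact (reciprocal neighborhood sizes sum to at most the number of clusters). What the paper's argument buys in exchange is a slightly sharper intermediate bound, namely the exact extremal value $\frac{\msenaive^2}{B}\paren[1]{\cN+(B-\cN)^2\tau/((B-\cN)(1+\tau)+1)}$, before relaxation; both arguments terminate in exactly the same stated inequality~\eqref{eq:res-indep-ave}.
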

The proof can be found in the supplementary material. In a nutshell, conditional to the favorable event
$A^c(\tau)$, and because the tests are independent of the estimators, we can use the argument leading to~\eqref{eq:rough}, extended to take into account the shrinkage factor $\gamma$, and optimize the
value of $\gamma$ to obtain~\eqref{eq:optgamma}, \eqref{eq:res-indep-single}.
If $B^c(\taum)$ is satisfied as well, we can deduce~\eqref{eq:res-indep-ave} directly from~\eqref{eq:res-indep-single}.

\paragraph{Discussion.}
\begin{itemize}
\item The factor in the individual MSE bound~\eqref{eq:res-indep-single}
  is strictly less than $1$ as soon as $\abs{V_i}>1$. As the number of
  neighbors $\abs{V_i}$ grows, the factor is larger than but
  approaches $\tau/(1+\tau)$. Therefore, there is a general
  trade-off between $\tau$ and the number of neighbors in a
  neighborhood of radius $\sqrt{\tau}\msenaive$.
  Nevertheless, in order to aim at possibly significant
  improvement over naive estimation, a small value of $\tau$ should be taken.
\item The factor in the averaged MSE bound~\eqref{eq:res-indep-ave} is also always smaller than 1
  (as expected from the individual MSE bound). It has a nice interpretation in terms of the ratio
  $\cN/B$: if $\cN \ll B$, the improvement factor will be very close to $\tau/(1+\tau)$.
  Thus, we collectively can improve
  over the naive estimation wrt MSE as soon as the set of means has a small covering number (at scale $\sqrt{\taum}\msenaive/2$)
  in comparison to its cardinality. This condition can be met in different structural low complexity situations, e.g. clustered means, means being sparse vectors,  set of means on a low-dimensional manifold.
  Note that the method does not need information about said structure in advance and is in this sense
  adaptive to it.
\end{itemize}

\subsection{Using the Same Data for Tests and Estimation}

We now present a general result in the case where the estimators and tests are not assumed to be independent
(e.g. computed from the same data.) To this end we introduce the following additional events:
\begin{equation*}
  C(\tau):  \set[2]{\underset{i\neq j}{\max} | \langle \naive_i - \mu_i , \naive_j - \mu_j \rangle | > \tau \msenaive^2};
  \;\;\;\;\;\;
  C'(\tau):  \set{\underset{i}{\max} \| \naive_i - \mu_i \|^2 > \msenaive^2 + \tau \msenaive^2}. \\
\end{equation*}
\begin{theorem}
  \label{th:onesample}
  Assume that there exists a family of tests $(T_{ij})_{(i,j) \in \intr{B}^2}$.
  For a given $\tau>0$ consider the family of estimators $(\wt{\mu}_i)_{i \in \intr{B}}$ defined by \eqref{eq:gammaest} with
  respective parameters
  \begin{equation}
    \label{eq:optgamma-onesample}
    \gamma_i := \frac{\tau }{1+\tau}.
  \end{equation}
  Then, for $\taum\geq \tau$, with probability greater than $1- \prob{A(\tau)\cup B(\tau') \cup C(\tau) \cup C'(\tau)}$, it holds
   \begin{equation}
    \label{eq:res-onesample-single}
    \forall i \in \intr{B}:
    \norm{ \wt{\mu}_i -\mu_i}^2 \leq 2 \msenaive^2 \paren{ \tau + \frac{\tau + |V_i|^{-1} }{1+ \tau}}.
  \end{equation}
  Let $\cN$ denote the covering number of the set of means $\set{\mu_b, b \in \intr{B}}$ by balls of radius $\sqrt{\taum}\msenaive/2$. Then, with the same probability as above,
  it holds
  \begin{equation}
    \label{eq:res-onesample-ave}
    \frac{1}{B} \sum_{i=1}^B 
    \norm{ \wt{\mu}_i -\mu_i}^2 \leq 2\msenaive^2 \paren{ \tau + \frac{\tau}{1+ \tau } + \frac{\cN}{B} \frac{1}{1+\tau } }.
  \end{equation}
\end{theorem}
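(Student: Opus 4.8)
The plan is to carry out the whole argument conditionally on the favorable event $E := A^c(\tau)\cap B^c(\taum)\cap C^c(\tau)\cap (C')^c(\tau)$, whose complement is exactly $A(\tau)\cup B(\taum)\cup C(\tau)\cup C'(\tau)$, so that $\prob{E}$ equals the probability announced in the theorem. On $E$ every random quantity I need is controlled deterministically, and this uniform control is what replaces the conditional-expectation argument of Theorem~\ref{th:indep}. Fixing a task $i$ and writing $u_j := \naive_j - \mu_j$, I would first split
\[ \wt{\mu}_i - \mu_i = \underbrace{\gamma_i u_i + \frac{1-\gamma_i}{\abs{V_i}}\sum_{j \in V_i} u_j}_{=:\,S_i} + \underbrace{\frac{1-\gamma_i}{\abs{V_i}}\sum_{j \in V_i}(\mu_j - \mu_i)}_{=:\,b_i}, \]
and bound $\norm{\wt{\mu}_i - \mu_i}^2 \leq 2\norm{S_i}^2 + 2\norm{b_i}^2$. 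This is where the global factor $2$ in \eqref{eq:res-onesample-single} comes from, and it is needed precisely because $V_i$ is now data-dependent, so the cross term $\langle S_i, b_i\rangle$ can no longer be killed by taking expectations as in the independent case.

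For the bias term, on $A^c(\tau)$ there is no false positive, hence every detected neighbor $j \in V_i$ satisfies $\Delta_{ij}^2 \leq \tau\msenaive^2$; convexity of $\norm{\cdot}^2$ then gives $\norm{b_i}^2 \leq (1-\gamma_i)^2\tau\msenaive^2$. For the stochastic term I would expand $\norm{S_i}^2 = \sum_{j\in V_i} c_j^2\norm{u_j}^2 + \sum_{j\neq k} c_j c_k \langle u_j,u_k\rangle$, where $c_i = \gamma_i + (1-\gamma_i)/\abs{V_i}$, $c_j = (1-\gamma_i)/\abs{V_i}$ for $j \in V_i^*$, and $\sum_{j\in V_i} c_j = 1$. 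On $(C')^c(\tau)$ the diagonal terms obey $\norm{u_j}^2 \leq (1+\tau)\msenaive^2$, while on $C^c(\tau)$ the off-diagonal ones obey $\abs{\langle u_j,u_k\rangle} \leq \tau\msenaive^2$; since the $c_j$ are nonnegative and sum to one, this yields $\norm{S_i}^2 \leq (\tau + \sum_j c_j^2)\msenaive^2$ with $\sum_j c_j^2 = \gamma_i^2 + (1-\gamma_i^2)/\abs{V_i}$. Plugging in the choice $\gamma_i = \tau/(1+\tau)$ from \eqref{eq:optgamma-onesample} and using $(1-\gamma_i)^2\tau = \gamma_i(1-\gamma_i)$, the two contributions combine into the single-task bound \eqref{eq:res-onesample-single}, the $\abs{V_i}^{-1}$ term collecting the residue of $\sum_j c_j^2$.

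The averaged bound \eqref{eq:res-onesample-ave} then follows by averaging \eqref{eq:res-onesample-single} over $i$: the only task-dependent quantity is $\abs{V_i}^{-1}$, so it suffices to prove $\sum_{i=1}^B \abs{V_i}^{-1} \leq \cN$. Here I would invoke $B^c(\taum)$: the absence of false negatives forces every pair with $\Delta_{ij}^2 \leq \taum\msenaive^2$ to be detected, so if the means are covered by $\cN$ balls of radius $\sqrt{\taum}\msenaive/2$, any two means inside a common ball are within $\sqrt{\taum}\msenaive$ and are therefore mutual neighbors. Consequently $\abs{V_i} \geq m_k$ for every $i$ whose mean lies in a ball carrying $m_k$ means, and summing $\abs{V_i}^{-1}$ ball by ball (assigning each mean to one covering ball) gives $\sum_i \abs{V_i}^{-1} \leq \cN$, exactly as in Theorem~\ref{th:indep}.

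I expect the main obstacle to be the loss of independence between tests and estimators: one can no longer condition on the neighbor sets $V_i$ and average over the data, since selection and estimation now use the same sample. The device that resolves this is the pair of events $C(\tau)$ and $C'(\tau)$, which furnish \emph{uniform} control, over all pairs simultaneously, of the norms $\norm{u_j}^2$ and --- most importantly --- of the cross-correlations $\langle u_j,u_k\rangle$. These inner products have zero mean thanks to the independence of the bags in \eqref{eq:mainmodel}, but here they must be shown small as random variables rather than merely in expectation; quantifying $\prob{C(\tau)}$ and $\prob{C'(\tau)}$ is deferred to the concrete Gaussian and KME settings, and once those events are granted the remainder is the careful but routine accounting outlined above.
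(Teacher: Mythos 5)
Your overall route is the same as the paper's: work on the favorable event $A^c(\tau)\cap B^c(\taum)\cap C^c(\tau)\cap C'^c(\tau)$, split $\wt{\mu}_i-\mu_i$ into a stochastic part $S_i$ and a bias part $b_i$, pay the global factor $2$ via $\norm{S_i+b_i}^2\leq 2\norm{S_i}^2+2\norm{b_i}^2$, control diagonal terms through $C'^c(\tau)$ and cross terms through $C^c(\tau)$, and close the averaged bound with the covering argument $\sum_i\abs{V_i}^{-1}\leq\cN$ (your version of that last step is correct and is exactly what the paper does). Your treatment of the stochastic term is also correct, and in fact slightly tighter than the paper's expansion: $\norm{S_i}^2\leq\paren[1]{\tau+\sum_j c_j^2}\msenaive^2$ with $\sum_j c_j^2=\gamma_i^2+(1-\gamma_i^2)/\abs{V_i}$ checks out.

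However, the final combination step fails as written. Your bias bound discards the factor $(1-\abs{V_i}^{-1})$: since the $j=i$ term vanishes in $b_i$, the triangle inequality actually gives $\norm{b_i}\leq(1-\gamma_i)(1-\abs{V_i}^{-1})\sqrt{\tau}\,\msenaive$, which you relax to $(1-\gamma_i)\sqrt{\tau}\,\msenaive$. With that relaxation, and using $(1-\gamma_i)^2\tau=\gamma_i(1-\gamma_i)$, your two contributions sum to
\[
2\msenaive^2\paren[2]{\tau+\gamma_i+\frac{1-\gamma_i^2}{\abs{V_i}}}
=2\msenaive^2\paren[2]{\tau+\frac{\tau}{1+\tau}+\frac{1+2\tau}{(1+\tau)^2}\,\abs{V_i}^{-1}},
\]
and since $(1+2\tau)/(1+\tau)^2=\frac{1}{1+\tau}+\frac{\tau}{(1+\tau)^2}>\frac{1}{1+\tau}$, this strictly exceeds the right-hand side of \eqref{eq:res-onesample-single} by $2\msenaive^2\tau\abs{V_i}^{-1}/(1+\tau)^2$; the claim that ``the two contributions combine into \eqref{eq:res-onesample-single}'' is therefore false, and the same excess propagates into \eqref{eq:res-onesample-ave}. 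The fix is simply to keep the factor you dropped: with $\norm{b_i}^2\leq(1-\gamma_i)^2(1-\abs{V_i}^{-1})^2\tau\msenaive^2$, writing $a=1-\abs{V_i}^{-1}$ and $\eta=1-\gamma_i=(1+\tau)^{-1}$, the total becomes $2\msenaive^2\brac[1]{\tau+1-2\eta a+\eta^2 a+\eta^2a^2\tau}$, and a one-line computation shows this is at most $2\msenaive^2\paren[1]{\tau+\frac{\tau+\abs{V_i}^{-1}}{1+\tau}}$ (the difference is $-2\msenaive^2\eta^2 a\tau(1-a)\leq 0$). This is precisely the point where the paper's proof is more careful: in its expansion the $\eta^2a^2$ terms cancel exactly between the bias and the cross-term bounds, and that cancellation is what produces the coefficient $1/(1+\tau)$ on $\abs{V_i}^{-1}$.
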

The interpretation of the above result is similar to that of Theorem~\ref{th:indep},
with the caveat that the factor in the MSE bound is not always bounded by 1 as earlier; but the
qualitative behaviour when $\tau$ is small, which is the relevant regime, is the same
as previously described.

\subsection{The {G}aussian Setting}\label{sec:gaussiansetting}

In view of the previous results, the crucial point is whether there exists a family of tests
such that the events $A(\tau), B(\taum), C(\tau), C'(\tau)$ have
small probability, for a value of $\tau$ significantly smaller than 1, and $\taum$ of the same order as $\tau$
(up to an absolute numerical constant).
This is what we establish now in the Gaussian setting.

\begin{proposition}
  \label{prop:testgauss}
  Assume~\eqref{eq:Gauss} is satisfied. For a fixed $\alpha \in (0,1)$, define the tests
  \begin{equation}
    \label{eq:deftestsgaus}
    T_{ij} = \ind{\norm{\naive_i -\naive_j}^2 \leq \zeta d/N},
  \end{equation}
  with $\zeta:=\paren{\sqrt{2+\tau} - 4 \sqrt{\delta}}^2$, where we put $\delta:=(2\log B + \log \alpha^{-1})/d$.

  Then, provided $\tau \geq \max(C\delta,\sqrt{C\delta})$ (with $C=10^3$), it holds $\prob{A(\tau)}\leq \alpha$,
$\prob{B(\taum)}\leq \alpha$ with $\taum=\tau/3$, $\prob{C(\tau)}\leq 2 \alpha$  and $\prob{C'(\tau)}\leq  \alpha$ .
\end{proposition}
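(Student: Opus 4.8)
The plan is to reduce all four probability bounds to concentration of Gaussian quadratic and bilinear forms. Write $\xi_i := \naive_i - \mu_i$; under~\eqref{eq:Gauss} these are independent with $\xi_i \sim \cN(0,N^{-1}I_d)$, so that $g_i := \sqrt N\,\xi_i \sim \cN(0,I_d)$ are i.i.d.\ standard Gaussian and $\msenaive^2 = d/N$. The two tools I would use throughout are the Gaussian tail bound $\prob{Z>t}\le e^{-t^2/2\sigma^2}$ for $Z\sim\cN(0,\sigma^2)$, and the Laurent--Massart deviation inequalities $\prob{\chi^2_d \ge d + 2\sqrt{dx}+2x}\le e^{-x}$ and $\prob{\chi^2_d \le d - 2\sqrt{dx}}\le e^{-x}$. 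The single identity doing most of the work is, with $u := \mu_i-\mu_j$ and $w := g_i-g_j\sim\cN(0,2I_d)$,
\[
  \norm{\naive_i-\naive_j}^2 = \norm{u}^2 + \frac{2}{\sqrt N}\inner{u,w} + \frac1N\norm{w}^2,
\]
where $\inner{u,w}\sim\cN(0,2\norm{u}^2)$ and $\tfrac12\norm{w}^2\sim\chi^2_d$. Writing $\norm{u}^2 = \theta\,\msenaive^2$ and applying the three tail bounds with $x := \delta d = 2\log B + \log\alpha^{-1}$, a direct computation rescales every fluctuation term into a multiple of $\msenaive^2$ and yields, for each fixed pair with probability at least $1-2e^{-x}$,
\[
  \msenaive^2\paren{\theta - 4\sqrt{\delta\theta} + 2 - 4\sqrt\delta} \;\le\; \norm{\naive_i-\naive_j}^2 \;\le\; \msenaive^2\paren{\theta + 4\sqrt{\delta\theta} + 2 + 4\sqrt\delta + 4\delta}.
\]

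For the false-positive event $A(\tau)$, I fix $(i,j)\in\ol{G}(\tau)$, so $\theta\ge\tau$, use the lower bound above, and check that $\theta - 4\sqrt{\delta\theta} + 2 - 4\sqrt\delta > \zeta = \paren{\sqrt{2+\tau}-4\sqrt\delta}^2$. Since $\theta\mapsto\theta - 4\sqrt{\delta\theta}$ is increasing for $\theta\ge\tau\ge C\delta$, the worst case is $\theta=\tau$, and the inequality simplifies (after dividing by $4\sqrt\delta$) to $2\sqrt{2+\tau}-\sqrt\tau-1 > 4\sqrt\delta$. The left-hand side is bounded below by a positive absolute constant, while the hypotheses $\tau\ge\sqrt{C\delta}$ and $\tau\ge C\delta$ force $\sqrt\delta$ to be a small fraction of $\min(\sqrt\tau,\tau)$, which absorbs the right-hand side in both the small- and large-$\tau$ regimes. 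A union bound over the at most $\binom B2$ pairs, each failing with probability $\le 2e^{-x}=2\alpha/B^2$, gives $\prob{A(\tau)}\le\alpha$.

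For the false-negative event $B(\taum)$ with $\taum=\tau/3$, I fix $(i,j)\in G(\taum)$, so $\theta\le\tau/3$, and use the upper bound, checking $\theta + 4\sqrt{\delta\theta} + 2 + 4\sqrt\delta + 4\delta \le \zeta$; again the worst case is $\theta=\taum$. The crux is that the gap $\zeta - (2+\taum)\msenaive^2 \approx (\tau-\taum)\msenaive^2 = \tfrac23\tau\,\msenaive^2$ must dominate the fluctuation scale, which is of order $\sqrt{\delta\tau}\,\msenaive^2$. This is precisely where the factor $3$ is spent: it opens a gap of order $\tau$ between the $H^1$ region and the threshold, and $\tau\gg\delta$ makes $\sqrt{\delta\tau}\ll\tau$. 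The same union bound gives $\prob{B(\taum)}\le\alpha$. I expect this simultaneous matching of the single threshold $\zeta$ against both $\tau$ (for $A$) and $\taum$ (for $B$) to be the main obstacle: $\zeta$ must sit below $(2+\tau)\msenaive^2$ and above $(2+\taum)\msenaive^2$ by more than the fluctuations at once, and verifying this for every admissible pair $(\tau,\delta)$ is what consumes the large constant $C=10^3$.

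The two remaining events are lighter. For $C'(\tau)$, $\norm{\naive_i-\mu_i}^2 = N^{-1}\norm{g_i}^2$ with $\norm{g_i}^2\sim\chi^2_d$, so the upper $\chi^2$ tail (valid since the hypothesis gives $\tau\ge 2\sqrt\delta+2\delta$) yields $\prob{\norm{g_i}^2>(1+\tau)d}\le e^{-\delta d}=\alpha/B^2$, and a union bound over $i$ gives $\prob{C'(\tau)}\le\alpha$. For $C(\tau)$ I would condition on $g_j$: then $\inner{g_i,g_j}\sim\cN(0,\norm{g_j}^2)$, so on $\set{\norm{g_j}^2\le(1+\tau)d}$ the Gaussian tail gives $\prob{\abs{\inner{g_i,g_j}}>\tau d \mid g_j}\le 2\exp(-\tau^2 d/(2(1+\tau)))$; the hypothesis $\tau\ge\sqrt{C\delta}$ makes $\tau^2/(2(1+\tau))\ge\delta$, so this is $\le 2\alpha/B^2$, and a union bound over pairs controls $C(\tau)\cap C'(\tau)^c$ by $\alpha$. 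Splitting $\prob{C(\tau)}\le\prob{C(\tau)\cap C'(\tau)^c}+\prob{C'(\tau)}\le\alpha+\alpha$ explains the factor $2\alpha$.
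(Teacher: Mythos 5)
Your proof is correct, and its overall architecture coincides with the paper's: concentrate $\norm{\naive_i-\naive_j}^2$ around $\Delta_{ij}^2+2\msenaive^2$ with fluctuations of order $\sqrt{\delta}$, verify that the single threshold $\zeta$ separates the $H^0$ region ($\theta\geq\tau$) from the $H^1$ region ($\theta\leq\tau/3$) using both hypotheses on $\tau$, and apply union bounds with $x=\delta d$ so that $e^{-x}=\alpha/B^2$. The differences lie in the concentration tools, and they are worth noting. The paper invokes a packaged noncentral chi-squared deviation inequality (Proposition~\ref{prop:chisqdev}, via Birg\'e's Lemma 8.1) and works on the scale of $\norm{Z}$ rather than $\norm{Z}^2$, which makes the Type I check immediate: $\zeta$ is literally the square of the lower deviation bound $\sqrt{2+\tau}-4\sqrt{\delta}$. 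You instead re-derive the same inequality by hand, splitting the statistic into signal, cross term (a $\cN(0,2\norm{u}^2)$ variable, Gaussian tail) and central part ($\chi^2_d$, Laurent--Massart), which is more elementary and self-contained at the cost of dragging the extra $4\sqrt{\delta\theta}$ and $\delta$ terms through the threshold calculus; your Type II margin does close under $C=10^3$, but only barely (a sum of terms bounded by roughly $0.64\,\tau$ against the available gap $\tfrac{2}{3}\tau$), exactly as in the paper's own computation. For $C(\tau)$ the paper uses the moment generating function of a product of independent Gaussians (Proposition~\ref{prop:scalarproddev}), giving a two-sided per-pair bound and hence $2\alpha$ directly, whereas you condition on $g_j$ and combine a Gaussian tail with the $\chi^2$ control of $\norm{g_j}^2$, recovering the factor $2$ from the split $\prob{C(\tau)}\leq\prob{C(\tau)\cap C'(\tau)^c}+\prob{C'(\tau)}$ --- equally valid, and it economically reuses the $C'$ analysis. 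Two harmless imprecisions: your two-sided sandwich actually holds with probability $1-4e^{-x}$ rather than $1-2e^{-x}$ (each one-sided bound already costs two tail events), but since $A(\tau)$ and $B(\taum)$ each consume only one side, the union bounds are unaffected; and in the $C(\tau)$ step the claim $\tau^2/(2(1+\tau))\geq\delta$ requires $\tau\geq C\delta$ when $\tau$ is large, not just $\tau\geq\sqrt{C\delta}$ --- both hypotheses are assumed, so nothing breaks, but the attribution should be corrected.
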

The above result is significant in combination with Theorems~\ref{th:indep} and~\ref{th:onesample}
when $\delta$ is small,
which is the case if $\log(B)/d$ is small. The message is the following: in
a high-dimensional setting, provided $B \ll e^d$, we can reach a large improvement compared to the
naive estimators, if the set of means exhibits structure, as witnessed by a small covering number
at scale $d^{\frac{1}{4}}\sqrt{(\log B)/N}$. The best-case scenario is when all the means are tightly
clustered around a few values, so that $\cN$ is small but $B$ is large, then
the improvement in the MSE is by a factor of order $\sqrt{(\log B)/d}$.

\subsection{Methodology and Theory in the Kernel Mean Embedding Framework}\label{sec:kmesetting}

We recall that the principle of KME posits a reproducing kernel $k$ on an input space $\mtc{Z}$,
corresponding to a feature mapping $\Phi: \mtc{Z} \rightarrow \cH$, where $\cH$ is a Hilbert
space, with $k(z,z') = \inner{\phi(z),\phi(z')}$. The feature mapping $\phi$ can be extended to
{\em probability distributions} $\mbp$ on $\mtc{Z}$, via $\phi(\mbp) := \ee{Z\sim \mbp}{\phi(Z)}$, provided
this expectation exists, which can be guaranteed for instance if $\phi$ is bounded.
This gives rise to an extended kernel on probability distributions via $k(\mbp,\mbq) := \inner{\phi(\mbp),\phi(\mbq)}
=\ee{(Z,Z') \sim \mbp \otimes \mbq}{k(Z,Z')}$.

As explained in the introduction, if we have a large number of distributions $(\mbp_{i})_{i \in \intr{B}}$ for each of which
an independent bag $(Z_k^{(i)})_{ 1 \leq k \leq N_i}$ is available, and we wish to collectively estimate their KMEs,
this is an instance of the model~\eqref{eq:mainmodel}-\eqref{def:means} under the transformation $X_k^{(i)} := \phi(Z_k^{(i)})$.
The distributions $\mbp_i$ are replaced by their image distribution through $\phi$ s.t. $\mu_i = \phi(\mbp_i)$ and the naive estimators are $\naive_i = \phi(\wh{\mbp}_i)$, where
$\wh{\mbp}_i$ is the empirical measure associated to bag $Z_{\bullet}^{(i)}$. We will make the
assumption that the kernel is bounded,
$\sup_{z \in \mtc{Z}} k(z,z) = \sum_{z \in \mtc{Z}} \norm{\phi(z)}^2 \leq L^2$, resulting in the following
``bounded setting'':
\begin{equation}
  \label{eq:bounded}
  \tag{BS} \forall i \in \intr{B}: N_i =N \text{ and } \norm[1]{X^{(i)}_k}\leq L, \mbp_i-\text{a.s.}, k\in \intr{N}.
\end{equation}
(note in particular that we still assume that all bags have the same size for the theoretical results.)

As always for kernel-based methods, elements of the Hilbert space $\cH$ are an abstraction which are never explicitly
represented in practice; instead, norms and scalar products between elements, that can be written as linear combinations
of sample points, can be computed by straightforward formulas using the kernel. In this perspective, a
central object is the {\em inter-task Gram matrix} $K$ defined as
$K_{ij} := k(\mbp_i,\mbp_j)
= \inner{\mu_i,\mu_j}, (i,j) \in \intr{B}^2$.
In the framework of {\em inference on distributions}, the distributions $\mbp_{i}$ act as (latent) training points and
the matrix $K$ as the usual kernel Gram matrix for kernel inference.
In contrast to what is assumed in standard kernel inference,
 $K$ is not directly observed but approximated by $\wh{K}$ s.t. $\wh{K}_{ij} := \inner{\wh{\mu}_i,\wh{\mu}_j}$,
for some estimators $(\wh{\mu}_i)_{i\in \intr{B}}$ of the true KMEs. The following elementary proposition links the quality of approximation of the means with the corresponding inter-task Gram matrix:
\begin{proposition}
  \label{prop:gramfrineq}
  Assume the model~\eqref{eq:mainmodel}-\eqref{def:means} under the assumption $\norm[1]{X_k^{(i)}} \leq L$ for all $k,i$.
  Let $\wh{\mu}_i$ be estimators of $\mu_i$ bounded by $L$, and the matrices $K$ and $\wh{K}$ defined as the Gram matrices of $(\mu_i)_{i\in \intr{B}}$
  and $(\wh{\mu}_i)_{i \in \intr{B}}$, respectively. Then
    \begin{equation}
      \norm[2]{\frac{1}{B}(K-\wh{K})}^2_{\mathrm{Fr.}} \leq \frac{4 L^2}{B} \sum_{i \in \intr{B}} \norm{\mu_i -\wh{\mu}_i}^2,
    \end{equation}
    where $\norm{K}_{\mathrm{Fr.}} := \tr(KK^T)^{\frac{1}{2}}$ is the Frobenius norm.
\end{proposition}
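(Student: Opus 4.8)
The plan is to reduce the statement to an entrywise estimate on the two Gram matrices. First I would expand the squared Frobenius norm as $\norm{K - \wh K}_{\mathrm{Fr.}}^2 = \sum_{(i,j) \in \intr{B}^2} (\inner{\mu_i,\mu_j} - \inner{\wh{\mu}_i, \wh{\mu}_j})^2$, so that the whole problem decouples into controlling a single generic entry difference $\inner{\mu_i,\mu_j} - \inner{\wh{\mu}_i,\wh{\mu}_j}$ and then summing.

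Next I would telescope this difference by inserting and subtracting the mixed term $\inner{\wh{\mu}_i, \mu_j}$, writing it as $\inner{\mu_i - \wh{\mu}_i,\, \mu_j} + \inner{\wh{\mu}_i,\, \mu_j - \wh{\mu}_j}$. Applying Cauchy--Schwarz to each summand and using $\norm{\mu_j} \leq L$ (which follows from Jensen's inequality applied to $\mu_j = \ee{X \sim \mbp_j}{X}$ together with $\norm{X^{(j)}_k} \leq L$) and the hypothesis $\norm{\wh{\mu}_i} \leq L$, I obtain the pointwise bound $\abs{\inner{\mu_i,\mu_j} - \inner{\wh{\mu}_i,\wh{\mu}_j}} \leq L\paren{\norm{\mu_i - \wh{\mu}_i} + \norm{\mu_j - \wh{\mu}_j}}$.

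Then I would square this bound and invoke the elementary inequality $(a+b)^2 \leq 2(a^2+b^2)$ to get $(K_{ij} - \wh{K}_{ij})^2 \leq 2L^2\paren{\norm{\mu_i - \wh{\mu}_i}^2 + \norm{\mu_j - \wh{\mu}_j}^2}$. Summing over all $(i,j) \in \intr{B}^2$, each of the two terms on the right contributes $B \sum_{i \in \intr{B}} \norm{\mu_i - \wh{\mu}_i}^2$ once the free index is summed out, giving $\norm{K - \wh{K}}_{\mathrm{Fr.}}^2 \leq 4 L^2 B \sum_{i \in \intr{B}} \norm{\mu_i - \wh{\mu}_i}^2$; dividing both sides by $B^2$ yields exactly the claimed inequality.

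There is no deep obstacle here, as the result is elementary. The only point requiring a little care is the choice of telescoping: the symmetric split through $\inner{\wh{\mu}_i, \mu_j}$ is what guarantees that each of the two resulting inner products pairs one quantity bounded by $L$ (either an exact mean $\mu_j$ or an estimator $\wh{\mu}_i$) with one error term, so that Cauchy--Schwarz produces the factor $L$ cleanly; a less careful decomposition would leave an unbounded factor. Everything else is Cauchy--Schwarz, a convexity inequality, and a counting argument.
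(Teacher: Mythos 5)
Your proof is correct and takes essentially the same route as the paper's: the identical telescoping split $\inner{\mu_i,\mu_j}-\inner{\wh{\mu}_i,\wh{\mu}_j} = \inner{\mu_i-\wh{\mu}_i,\mu_j}+\inner{\wh{\mu}_i,\mu_j-\wh{\mu}_j}$, Cauchy--Schwarz with the bound $L$ on $\mu_j$ and $\wh{\mu}_i$, the inequality $(a+b)^2\le 2(a^2+b^2)$, and the same counting over indices; the only (immaterial) difference is that you apply Cauchy--Schwarz before squaring rather than after. You additionally spell out the Jensen argument giving $\norm{\mu_j}\le L$, which the paper leaves implicit.
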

This result further illustrates the interest of improving the task-averaged squared error.

In order to apply our general results Theorems~\ref{th:indep} and~\ref{th:onesample}, we must again find
suitable values of $\tau$ (as small as possible) and $\tau'$ (as close to $\tau$ as possible) so that
the probability of the events $A(\tau),B(\tau'),C(\tau), C'(\tau)$ is small,
in the setting~\eqref{eq:bounded}.
In that context, the role of the dimension $d$ will be played by the {\em effective dimension}
$\tr \Sigma / \norm{\Sigma}_{op}$, where $\Sigma$ is the covariance operator for
the variable $X$. More precisely, since this quantity can change from one source distribution to the the other,
we will make the following  assumption: there exists $\deff>0$ such that
\begin{equation}
  \label{eq:deffunif}
  \forall i\in \intr{B}:\qquad \deff \normop{\Sigma_i} \leq \tr{\Sigma_i}  \leq N \msenaive^2.
\end{equation}
Observe that in view of~\eqref{eq:naivedef}, the upper bound above is merely a reformulation of~\eqref{eq:bdsigma}
and, therefore, not a new assumption; the lower bound is.

We consider tests based on the unbiased estimate of the maximum mean discrepancy (MMD; note that
the MMD between tasks $i$ and $j$ is exactly $\Delta_{ij}^2$):
\begin{equation*}
  U_{ij} = \frac{1}{N(N-1)} \sum_{\substack{k,\ell =1\\k\neq \ell}}^{N} \paren{\inner[1]{X^{(i)}_k,X^{(i)}_\ell}
    + \inner[1]{X_k^{(j)},X^{(j)}_\ell}}
  - \frac{2}{N^2} \sum_{k,\ell=1}^{N} \inner[1]{X^{(i)}_k, X^{(j)}_\ell}.
\end{equation*}

\begin{proposition}
  \label{prop:testkernel}
  Consider model \eqref{eq:mainmodel}, the bounded setting~\eqref{eq:bounded} and
  assume~\eqref{eq:deffunif} holds. Define
  \begin{equation}
    \label{eq:defr}
    r(t) :=  5 \paren{\sqrt{\left(\frac{1}{\deff}+ \frac{L}{N\msenaive} \right)t} + \frac{Lt}{N\msenaive}},
\end{equation}
and
\begin{equation}
    \label{eq:deftau1}
    \tau_{\min}(t) := r(t) \max\paren{\sqrt{2},r(t) }.
\end{equation}

  For a fixed $t \geq 1$, define the tests $T_{ij}$ for $i,j$ in $\intr{B}^2$
  \begin{equation}
    \label{eq:simtest_kme}
    T_{ij} := \ind[1]{ U_{ij}< {\tau \msenaive^2}/{2} }.
  \end{equation}
Then, provided $  \tau  \geq 144 \tau_{min}(t)$ , it holds
\begin{equation*}
\prob{A(\tau) \cup B(\tau/4) \cup C(\tau/7) \cup C'(\tau/48) } \leq  14B^2e^{-t}\,.
\end{equation*}
\end{proposition}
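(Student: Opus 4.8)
The plan is to reduce all four events to the concentration of a handful of elementary quantities built from the centered empirical means $S_i := \naive_i - \mu_i = \tfrac1N\sum_k \xi_k^{(i)}$, where $\xi_k^{(i)} := X_k^{(i)} - \mu_i$ is centered, bounded by $\norm{\xi_k^{(i)}}\le 2L$, and has covariance $\Sigma_i$. Two of the events are already in this form: $C'(\tau/48)$ concerns $\norm{S_i}^2$, whose mean is $\tr \Sigma_i/N\le\msenaive^2$, and $C(\tau/7)$ concerns the cross terms $\inner{S_i,S_j}$ with $i\ne j$, which have mean zero by independence of the bags. For $A(\tau)$ and $B(\tau/4)$ I would expand the U-statistic around its mean $\Delta_{ij}^2=\mbe[U_{ij}]$ as $U_{ij}-\Delta_{ij}^2 = L_{ij}+Q_{ij}$, with linear part $L_{ij}=2\inner{\mu_i-\mu_j,\,S_i-S_j}$ and degenerate quadratic part $Q_{ij}=W_i+W_j-2\inner{S_i,S_j}$, where $W_i:=\tfrac1{N(N-1)}\sum_{k\ne\ell}\inner{\xi_k^{(i)},\xi_\ell^{(i)}}$. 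A false positive (resp. false negative) forces $L_{ij}+Q_{ij}$ to deviate downward (resp. upward) by a multiple of $\tau\msenaive^2$, so the proposition follows once $L_{ij}$, $Q_{ij}$, $\norm{S_i}^2$ and $\inner{S_i,S_j}$ are all controlled at scale proportional to $\tau\msenaive^2$.

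The toolbox I would assemble has three pieces, in each of which the effective dimension enters through $\normop{\Sigma_i}\le \tr \Sigma_i/\deff\le N\msenaive^2/\deff$, coming from~\eqref{eq:deffunif}. First, a Bernstein inequality in Hilbert space for $\sum_k\xi_k^{(i)}$, in which the \emph{mean} is governed by $\tr \Sigma_i$ ($\mbe\norm{S_i}\le\msenaive$) but the \emph{fluctuation} only by the weak variance $\normop{\Sigma_i}/N$, yields $\norm{S_i}\le\msenaive + c\paren{\msenaive\sqrt{t/\deff}+Lt/N}$ with probability $1-e^{-t}$; squaring gives $C'$ and the reusable bound $\norm{S_i}\lesssim\msenaive$. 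Second, for a fixed direction $u$ the scalar $\inner{u,S_i}$ is an average of bounded variables with variance at most $\normop{\Sigma_i}/N$, so scalar Bernstein controls it at scale $\msenaive\sqrt{t/\deff}+Lt/N$; applied with $u=(\mu_i-\mu_j)/\Delta_{ij}$ it bounds $L_{ij}$, and conditionally on bag $j$ with $u=S_j/\norm{S_j}$ it bounds $\inner{S_i,S_j}$, giving $C$. Third, the degenerate U-statistic $W_i$ is handled by a Hanson--Wright / U-statistic inequality with variance proxy $\norm{\Sigma_i}_{\mathrm{HS}}^2/N^2\le\normop{\Sigma_i}\tr \Sigma_i/N^2=\normop{\Sigma_i}\msenaive^2/N$ plus bounded corrections. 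Matching the three resulting scales $\sqrt{t/\deff}$, $\sqrt{Lt/(N\msenaive)}$ and $Lt/(N\msenaive)$ against the definition of $r(t)$ is exactly where the diagonal contribution $\tfrac1N\sum_k\norm{\xi_k^{(i)}}^2$, controlled by scalar Bernstein using $\norm{\xi}^2\le 4L^2$, produces the $L/(N\msenaive)$ term.

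Once each quantity is bounded with probability $1-e^{-t}$, I would union bound over the at most $B^2$ pairs and the at most $14$ elementary sub-events (both signs for $A$ and $B$, the linear and quadratic pieces, the cross and diagonal pieces, and the norm bound), absorbing all numerical constants into the requirement $\tau\ge 144\,\tau_{\min}(t)$; the distinct fractions $\tau/4,\tau/7,\tau/48$ merely allocate the budget $\tau\msenaive^2$ among the pieces, and the factor $\max(\sqrt2,r(t))$ in $\tau_{\min}$ reflects that some deviations scale like $r(t)$ while others scale like $r(t)^2$, so $\tau$ must dominate both.

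The step I expect to be the main obstacle is the false-positive analysis of $A(\tau)$, because there $\Delta_{ij}$ may be arbitrarily large and the naive Cauchy--Schwarz bound $\abs{L_{ij}}\le 2\Delta_{ij}\norm{S_i-S_j}$ is far too lossy, forcing $\tau\gtrsim1$. The resolution is to keep $L_{ij}$ as a one-dimensional projection onto $(\mu_i-\mu_j)/\Delta_{ij}$, whose fluctuation is governed by $\normop{\Sigma_i}$ rather than $\tr \Sigma_i$; this converts the dangerous factor $\Delta_{ij}\msenaive$ into $\Delta_{ij}\msenaive\sqrt{t/\deff}$ and lets the threshold be as small as $\mathcal{O}(r(t)^2)$. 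Carrying the numerous absolute constants through this argument so that they collapse into the clean factor $144$ is the remaining bookkeeping burden.
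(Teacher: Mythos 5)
Your proposal is correct in substance and reaches the stated bound by a genuinely different route from the paper. You use the Hoeffding decomposition $U_{ij} = \Delta_{ij}^2 + L_{ij} + Q_{ij}$ and control the degenerate part $Q_{ij} = W_i + W_j - 2\inner{S_i,S_j}$ directly: a Gin\'e--Lata{\l}a--Zinn / Hanson--Wright type inequality for the canonical U-statistic $W_i$ (variance proxy $\norm{\Sigma_i}_{\mathrm{HS}}^2/N^2 \leq \normop{\Sigma_i}\tr \Sigma_i/N^2$), and a conditioning argument in the direction $u = S_j/\norm{S_j}$ for the cross term. The paper instead writes $U_{ij}$ as $\norm[1]{\naive_i-\naive_j}^2$ plus a diagonal correction $H/(N-1)$ (Proposition~\ref{prop:ustatdev}) and derives everything --- including the events $C$ and $C'$, with the cross term handled by polarization rather than conditioning --- from a single sharp norm-concentration result (Corollary~\ref{cor:meanbounds}) plus scalar Bernstein, relying on the exact cancellation of the first-order $\tr\Sigma/N$ terms. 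Your route makes that cancellation automatic (degeneracy gives $\e{Q_{ij}}=0$ algebraically) at the price of importing degenerate-U-statistic machinery the paper does not need; conversely, your conditional treatment of $\inner{S_i,S_j}$ is arguably simpler than the paper's polarization. Your false-positive analysis (keeping $L_{ij}$ one-dimensional so its fluctuation is $\Delta_{ij}\msenaive\sqrt{t/\deff}$ rather than $\Delta_{ij}\msenaive$, then exploiting monotonicity of the resulting quadratic in $\Delta_{ij}$) and the final union-bound accounting coincide with the paper's.

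One caution: the cornerstone of your plan, the bound $\norm{S_i}\leq\msenaive + c\paren[1]{\msenaive\sqrt{t/\deff} + Lt/N}$ with the mean term at constant $1$ and the fluctuation governed only by $\normop{\Sigma_i}/N$, is \emph{not} the standard Hilbert-space Bernstein inequality: the Pinelis--Sakhanenko bound has its deviation term proportional to $\sqrt{\tr\Sigma_i/N}$, which would lose the factor $1/\sqrt{\deff}$ and destroy the improvement the proposition is about (this is precisely the point of the remark following Corollary~\ref{cor:meanbounds}). You must either prove this sharper inequality via Talagrand's inequality, as the paper does, or cite a result of that strength; provided you do, your argument goes through up to the bookkeeping of constants.
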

The quantity $r(t)$ above (taking $t=\log(14B^2\alpha^{-1})$, where $1-\alpha$ is the target probability)
plays a role analogous to $\delta$ in the Gaussian setting (Proposition~\ref{prop:testgauss}).
As the bag size $N$ becomes sufficiently large, we expect $\msenaive=\cO(N^{-\frac{1}{2}})$
and, therefore, $\msenaive N = \cO(N^{\frac{1}{2}})$. Hence, provided $N$ is large enough,
the quantity $r(t)$ is mainly of the order $\sqrt{\log(B)/\deff}$. Like in the Gaussian case,
this factor determines the potential improvement with respect to the naive estimator, which can
be very significant if the effective data dimensionality $\deff$ is large.

From a technical point of view, capturing precisely the role of the effective dimension required us
to establish concentration inequalities for deviations of sums of bounded vector-valued
variables improving over the classical vectorial Bernstein's inequality of~\citet{PinSak86}.
We believe this result (see Corollary~\ref{cor:meanbounds} in the supplemental) to be of interest
of its own and to have potential other applications.

\section{EXPERIMENTS AND EVALUATION}\label{sec:experiments}
We validate our theoretical results in the KME setting\footnote{In the Gaussian setting, we report
numerical results in the Appendix~\ref{apx:gaussresults}.} on both synthetic as well as real world data.
The neighboring kernel means are determined from the tests as described in Eq.~\eqref{eq:simtest_kme}.
More specifically, in practice we use the modification that (i) we adapt
the formula for possibly unequal bag sizes, and (ii) in each test $T_{ij}$
we replace $\msenaive^2$ by the task-dependent
unbiased estimate
\begin{align}\label{eq:esttaskvar}
\wh{\MSE}(i,\naive_i) := \frac{1}{2N_i^2\left(N_i - 1\right)} \cdot \sum_{k \neq \ell}^{N_i} k(Z_k^{(i)}, Z_k^{(i)})
 - 2k(Z_k^{(i)}, Z_{\ell}^{(i)}) + k(Z_{\ell}^{(i)}, Z_{\ell}^{(i)}).
\end{align}
We analyze three different variations of our method which we call similarity test based (STB) approaches.
\STBnorm{} corresponds to Eq.~\eqref{eq:gammaest} with $\gamma = 0$.
\STBweight{} uses model optimization to find a suitable value for $\gamma$, whereas \STBtheory{} sets $\gamma$ as defined in Eq. \eqref{eq:optgamma}.
However, here we replaced $\tau$ with $c \cdot \tau$, where $c>0$ is a multiplicative constant, to allow for more flexibility.

We compare their performances to the naive estimation, \NE, and the regularized shrinkage estimator, \RKMSE, \citep{muandet2016stein} which also estimates the KME of each bag separately but shrinks it towards zero.
Furthermore, we modified the multi-task averaging approach presented in \citet{feldman2014revisiting} such that it can be used for the estimation of kernel mean embeddings.
Similar to our idea, this method shrinks the estimation towards related tasks.
However, they require the task similarity to be known.
Therefore, we test two options:
\MTAconst{} assumes constant similarity for each bag;
\MTAstb{} uses the proposed test from Eq. \eqref{eq:simtest_kme} to assess the bags for their similarity.
See Appendix~\ref{apx:testedMethods} for a detailed description of the tested methods.

In the presented results, each considered method has up to two tuning parameters that, in our experiments, are picked in order to optimize
averaged test error. Therefore, the reported results can be understood as close to ``oracle'' performance -- the best potential of each method when parameters are
close to optimal tuning. While this can be considered unrealistic for practice, a closely related situation can occur in the setting
where the user wishes to use the method on test bags of size $N$, and has at hand a limited number of training bags
of much larger size $N'\gg N$. From each such training bag, one can subsample $N$ points, use the method for estimation of the means of all bags of size $N$ (incl. subsampled bags), and monitor the error with respect to the means of the full training bags (of size $N'$, used as a ground truth proxy).
This allows a reasonable calibration of the tuning parameters.

\subsection{Synthetic Data}
The toy data consists of multiple, two-dimensional Gaussian distributed bags $Z^{(i)}_{\bullet}$ with fixed means but randomly rotated covariance matrices, i.e.
\begin{equation*}
  Z^{(i)}_{\bullet}
  \sim \mathcal{N}\left(\mathbf{0}, R(\theta_i) \Sigma
{R(\theta_i)}^T\right) = \mathbb{P}_i \;,
\;\;\;\;\;
\theta_i \sim \mathcal{U}(-\nicefrac{\pi}{4}, \nicefrac{\pi}{4}),
\end{equation*}
where the covariance matrix $\Sigma = \text{diag}(1,10)$ is rotated using rotation matrix  $R(\theta_i)$ according to angle $\theta_i$.
The different estimators are evaluated using the unbiased, squared \MMD{} between the
estimation $\wt{\mu}_i$ and $\mu_i$ as loss.
Since $\mu_i$ is unknown, it must be approximated by another (naive) estimation $\naive_i(Y^{(i)}_{\bullet})$ based on independent test bags $Y^{(i)}_{\bullet}$ from the same distribution as $Z^{(i)}_{\bullet}$,
with $|Y^{(i)}_{\bullet}| = 1000$.   The test bag $Y^{(i)}_{\bullet}$ has
much larger size than the training bag $Z^{(i)}_{\bullet}$, as a consequence
the estimator $\naive_i(Y^{(i)}_{\bullet})$ has a lower MSE than all considered estimators based on $Z^{(i)}_{\bullet}$, and can be used as a proxy for the true $\mu_i$.\footnote{
  Additionally, the estimation of the squared loss is unbiased if the diagonal entries of the Gram matrix will be included for $Z^{(i)}_{\bullet}$ but excluded for $Y^{(i)}_{\bullet}$.}
In order to guarantee comparability, all methods use a Gaussian RBF with the kernel width fixed to the average feature-wise standard deviation of the data.
Optimal values for the model parameter, e.g. $\zeta$ and $\gamma$ for \STBweight , are selected such that they minimize the estimation error averaged over 100 trials.
Once the values for the parameters are fixed, another 200 trials of data are generated to estimate the final generalization error.
Different experimental setups were tested:
\begin{itemize}[nosep]
\item[(a)] \textbf{Different Bag Sizes}
$B = 50$ and $N_i \in [10, 300]$  for all $i \in \intr{B}$,
\item[(b)] \textbf{Different Number of Bags}
$B \in [10, 300]$ and $N_i = 50$ for all $i \in \intr{B}$,
\item[(c)] \textbf{Imbalanced Bags} $B = 50$ and $N_1 = 10, \ldots, N_{50} = 300$,
\item[(d)] \textbf{Clustered Bags} $N_i, B = 50$  for all $i \in \intr{B}$ but
the Gaussian distributions are no longer centered around $\mathbf{0}$.
Instead, each ten bags form a cluster with the cluster centers equally spaced on a circle.
The radius of the circle is varied between 0 and 5, to model different degrees of overlap between clusters.
\end{itemize}

The results for the experiments on the synthetic data can be found in Figure \ref{fig:toy_results}(a) to (d).
The estimation of the KME becomes more accurate as the bag size per bag increases.
Nevertheless, all of the tested methods provide an increase in estimation performance over the naive estimation, although, the improvement for larger bag sizes decreases for \RKMSE{} and \MTAconst.
As expected, methods that use the local neighborhood of the KME yield lower estimation error when the number of available bags increases.
Interestingly, this decrease seems to converge towards a capping value,
which might reflect the intrinsic dimensionality of the data as indicated by Theorems~\ref{th:indep} and~\ref{th:onesample} combined with Proposition~\ref{prop:testkernel}.
Although we assumed equal bag sizes in the theoretical results, the proposed approaches provide accurate estimations also for the imbalanced setting.
Figure \ref{fig:toy_results}(c) shows that the improvement is most significant for bags with few samples, which is consistent with results on other multi-task learning problems (see e.g. \citealp{feldman2014revisiting}).
However, when the KME of a bag with many samples is shrunk towards a neighbor with few samples, the estimation can be deteriorated (compare results on (a) with those on (c) for large bag sizes).
A similar effect can be seen in the results on the clustered setting.
When the bags overlap, a bag from a different cluster might be considered as neighbor which leads to a stronger estimation bias.
When the tasks have similar centers or are strictly separated, the methods show similar performance to what is shown in Figure \ref{fig:toy_results}(b).

To summarize, \NE{} and \RKMSE{} give worst performances because they estimate the kernel means separately.
Even though \MTAconst{} assumes all tasks to be related, it improves the estimation performance even when the bags are not similar.
However, the methods that derive the task similarity from the local neighborhood achieve most accurate KME estimations in all of the tested scenarios, especially \STBweight{} and \STBtheory.

\begin{figure}[!ht]
\centering
\begin{tabular}[t]{ll}
 \multicolumn{1}{c}{(a) Different Bag Sizes} & \multicolumn{1}{c}{(b) Different Number of Bags}\\
\includegraphics[width=0.45\textwidth]{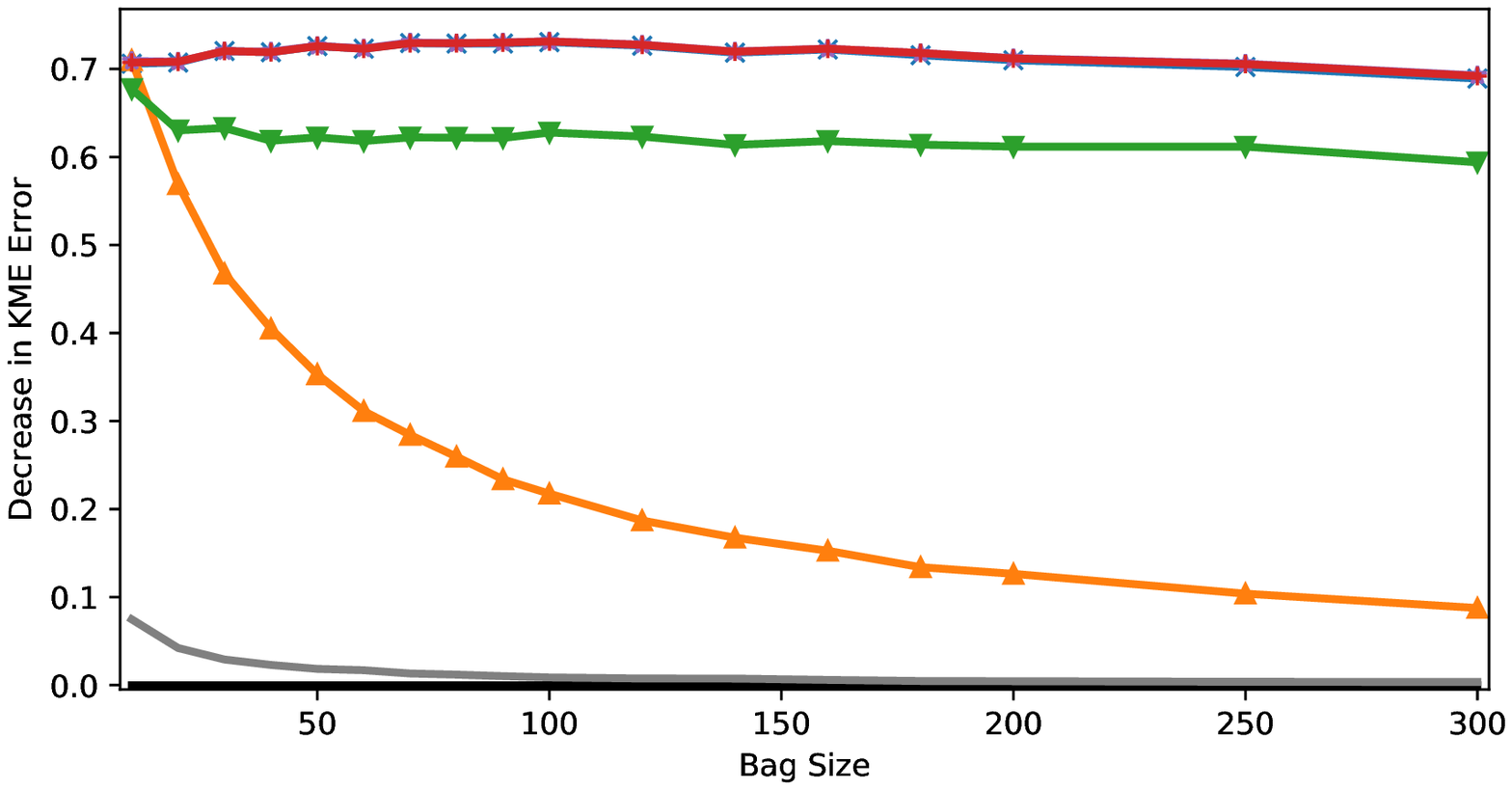} & \includegraphics[width=0.45\textwidth]{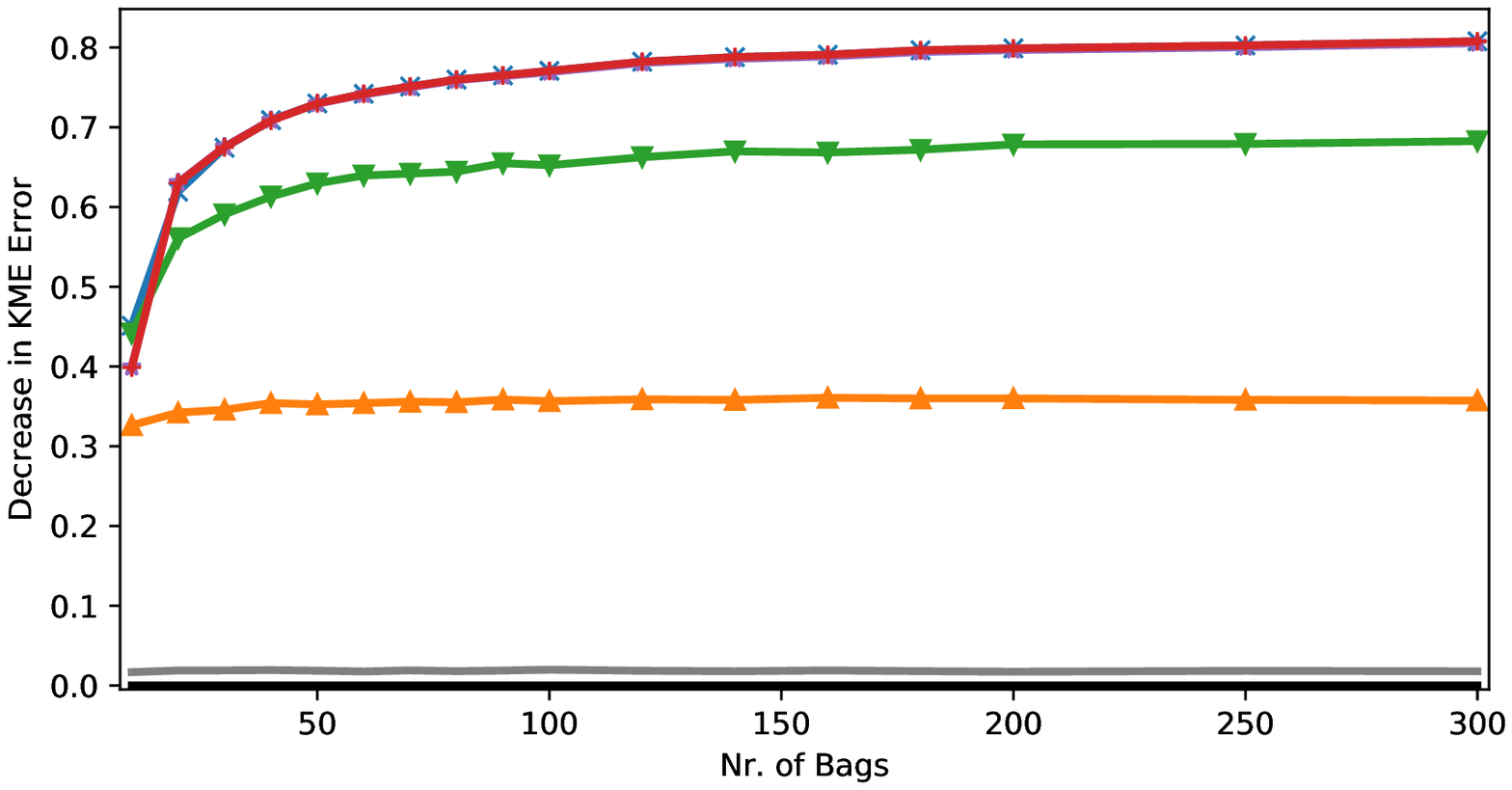}\\
[5pt]
\multicolumn{1}{c}{(c) Imbalanced Bags} & \multicolumn{1}{c}{(d) Clustered Bags}\\
\includegraphics[width=0.475\textwidth,valign=t]{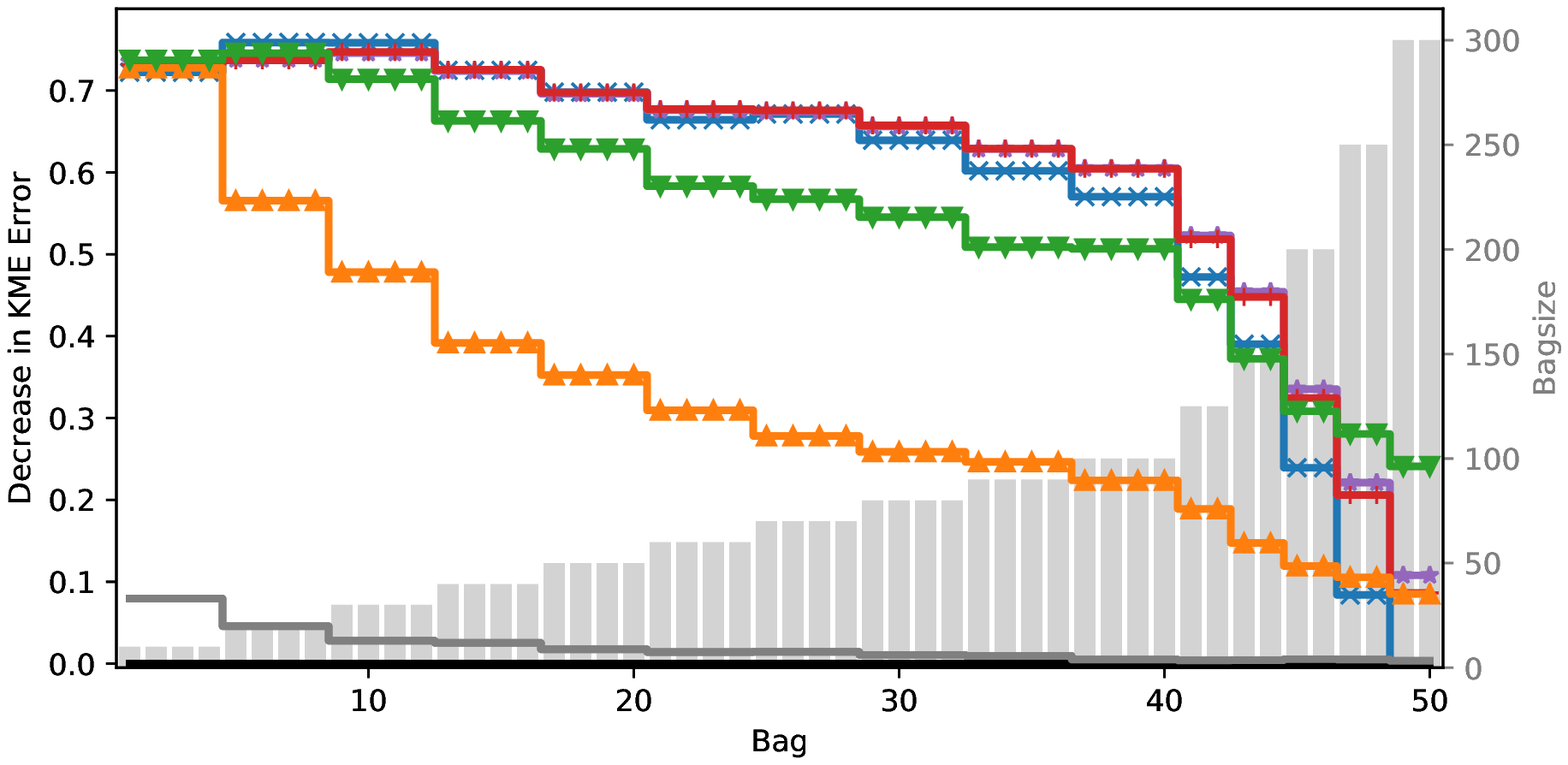}& \includegraphics[width=0.45\textwidth,valign=t]{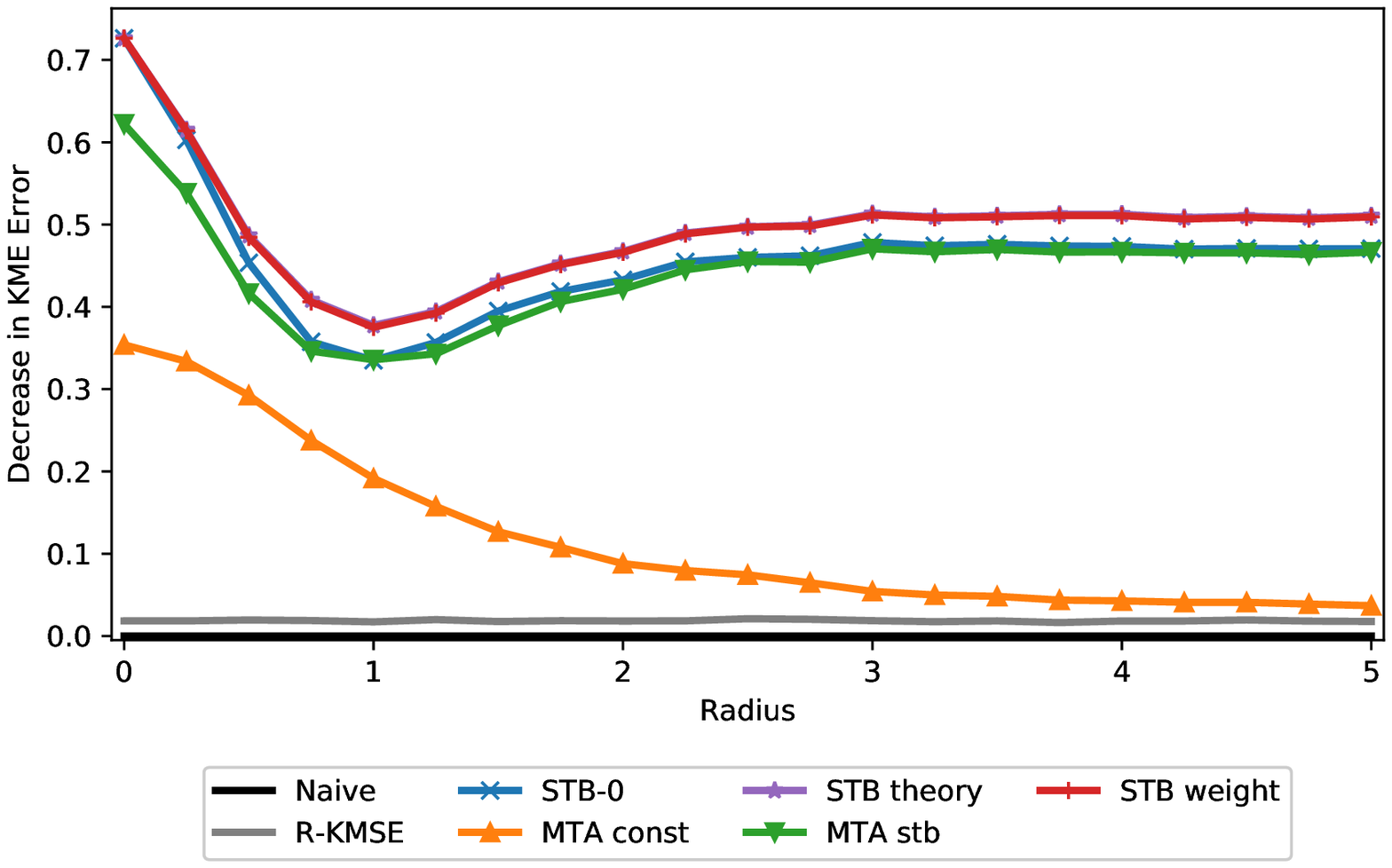}\\
\end{tabular}
\caption{
Decrease in KME estimation error compared to \NE{} in percent on experimental setups (a) to (d). Higher is better. \STBnorm, \STBweight{} and \STBtheory{} give similar results so that their results might be printed on top of each other.}
\label{fig:toy_results}
\end{figure}

\subsection{Real World Data}
We test our methods on a remote sensing data set.
The AOD-MISR1 data set is a collection of 800 bags with each 100 samples.
The samples correspond to randomly selected pixels from a MISR satellite, where each instance is formed by 12 reflectances from three MISR cameras.\footnote{We only use 12 out of 16 features because the remaining four are constant per bag.}
It can be used to predict the aerosol optical depth (AOD) which poses an important problem in climate research \citep{wang2011mixture}.

The data is standardized such that each of the features has unit standard deviation and is centered around zero.
In each out of the 100 trials, we randomly subsample 20 samples from each bag, on which the KME estimation is based.
This estimation is then compared to the naive estimation on the complete bag.
Cross-validation, with 400 bags for training and testing, is used to optimize for the model parameters of each approach and then estimate its error.
Again, all methods use a Gaussian RBF with the kernel width fixed to one.
The results are shown in Table \ref{tab:aod_results}.

\begin{table}[hb]
\caption{Decrease in KME estimation error compared to \NE{} in percent on the AOD-MISR1 data.} \label{tab:aod_results}
\begin{center}
\begin{tabular}{lrclrclr}
\textbf{METHOD}	&\textbf{\%} && \textbf{METHOD}	&\textbf{\%} && \textbf{METHOD}	&\textbf{\%}\\
\hline \\
\RKMSE	& $\phantom{0}8.83$ &$\phantom{000}$& \MTAconst & $13.92$ 	&$\phantom{000}$& \STBtheory & $21.83$\\
\STBnorm & $\phantom{0}1.43$ &$\phantom{000}$& \MTAstb	 & $17.17$ 	&$\phantom{000}$& \STBweight & $22.73$\\
\end{tabular}
\end{center}
\end{table}

Again, all of the methods provide a more accurate estimation of the KME than the naive approach.
The estimations given by \STBnorm{} are similar to those of \NE, because \STBnorm{} considers very few bags as neighbors.
This lets us conclude that the bags are rather isolated than overlapping.
\MTAstb, \STBweight{} and \STBtheory{} might give better estimations because they allow for more flexible shrinkage.
Again, \STBweight{} and \STBtheory{} are outperforming the remaining methods.

\section{CONCLUSION}
In this paper we proposed an improved estimator for the multi-task averaging problem.
The estimation is improved by shrinking the naive estimation towards the average of its neighboring means.
The neighbors of a task are found by multiple testing so that task similarities must not be known a priori.
Provided that appropriate tests exist, we proved that the introduced shrinkage approach yields a lower mean squared error for each task individually and also on average.
We show that there exists a family of statistical tests suitable for isotropic Gaussian distributed data or for means that lie in a reproducing kernel Hilbert space.
Theoretical analysis shows that this improvement can be especially significant
when the (effective) dimension of the data is large, using the property that the typical detection
radius of the tests is much better than the standard estimation error in high dimension.
This property is particularly important for the estimation of multiple kernel mean embeddings (KME) which is an interesting application relevant for the statistical and machine learning community.
The proposed estimator and the theoretical results can naturally be translated to the KME framework.

We tested different variations of the presented approach on synthetic and real world data and compared its performance to other state-of-the-art methods.
In all of the conducted experiments, the proposed shrinkage estimators yield the most accurate estimations.

Since the estimation of a KME is often only an intermediate step for solving a final task, as for example in distributional regression \citep{szabo2016learning}, further effort must be made to assess whether the improved estimation of the KME also leads to a better final prediction performance.
Furthermore, the results on the imbalanced toy data sets have shown that the shrinkage estimator particularly improves the estimation of small bags.
However, when the KME of a bag with many samples is shrunk towards a neighbor with low bag size, its estimation might be distorted.
Therefore, another direction for future work will be the development of a similarity test or a weighting scheme that take the bag size into account in a principled way. From a theoretical perspective, we also will investigate if the improvement factor with respect to the naive estimates is
optimal in a suitable minimax sense, and if the logarithmic factor $\log(B)$ and the number of tasks appearing in this factor can be removed
or alleviated in certain circumstances.

\subsubsection*{Acknowledgements}
The research of HM was funded by the German Ministry for Education and Research as BIFOLD (01IS18025A and 01IS18037A).
The research of GB has been partially funded by Deutsche Forschungsgemeinschaft (DFG) - SFB1294/1 - 318763901, and by the Agence Nationale de la Recherche (ANR, Chaire IA ``BiSCottE'').
GB acknowledges various inspiring and motivating discussions with A.~Carpentier, U.~Dogan, C.~Giraud, V.~Koltchinskii, G.~Lugosi, A.~Maurer, G.~Obozinski, C.~Scott.

\vfill
\pagebreak

\appendix
\section{Proof of Theorem~\ref{th:indep}}
\label{apx:proofindep}
  We argue conditional to the tests, below expectations are taken with respect to the
  samples $(X^{(b)}_{\bullet})_{b \in \intr{B}}$ only. Assume the event $A^c(\tau)$ holds,
  implying for all $i$:
  \begin{equation}
    \label{eq:neighborsA}
    j \in V_i \Rightarrow \Delta_{ij}^2 \leq \tau \sigma^2.
  \end{equation}
  Take $i=1$ without loss of generality, and denote $V=V_1, V^*=V_1\setminus\set{1},$ and $v=\abs{V_1}$.
  We also put $\eta=1-\gamma$. We use an argument similar to that leading to~\eqref{eq:rough}
  using independence of the bags, triangle inequality and~\eqref{eq:neighborsA}:
  \begin{align*}
  \MSE(1,\wt{\mu}_1)
  & = \e[4]{\norm[3]{(1-\eta)(\naive_1-\mu_1) + \frac{\eta}{v}\sum_{j \in V} (\naive_i-\mu_1)}^2}\\
  & = \frac{\eta^2}{v^2}\paren{ \norm[3]{\sum_{i \in V^*} (\mu_i-\mu_1)}^2 + \sum_{i\in V^*} \e{\norm{\mu_i-\naive_i}^2}}\\
  & \qquad  + (1-\eta(1-v^{-1}))^2\e{\norm{\naive_1-\mu_1}^2}\\
  & \leq \sigma^2\paren[2]{\frac{\eta^2}{v^2} \paren{(v-1)^2\tau + (v-1)}
   +  (1-\eta(1-v^{-1}))^2}\\
  & = \sigma^2\paren{\eta^2(1-v^{-1})\paren{(1-v^{-1})\tau +  1} - 2\eta(1-v^{-1}) +1 }.
\end{align*}
The optimal value of $\gamma=1-\eta$ is given by~\eqref{eq:optgamma} and gives rise to~\eqref{eq:res-indep-single}.

Assume additionally that $B^c(\taum)$ holds. Let $\eps:=\sqrt{\taum} \sigma/2$ and
let $\cC := \set{x_1,\ldots,x_\cN}$ be an $\eps$-covering of the set of means. Let $\pi(i)$
be the index of the element of $\cC$ closest to $\mu_i$, and $N_k := \set{b \in \intr{B}: \pi(b)=k}$,
$i \in \intr{\cN}$. By the triangle inequality, for any $i\in\intr{\cN}$, $b \in N_i$ one has $\abs{V_b} \geq \abs{N_{\pi(i)}}$. Hence averaging~\eqref{eq:res-indep-single} over $i$ we get
\begin{align*}
  \frac{1}{B} \sum_{b=1}^B \MSE(b,\wt{\mu}_b)
  &\leq \frac{\sigma^2}{B} \sum_{i \in \intr{B}} \frac{\tau(\abs[0]{N_{\pi(i)}}-1)+1}{
    1+(1+\tau)(\abs[0]{N_{\pi(i)}}-1)}\\
  & = \frac{\sigma^2}{B} \sum_{k \in \intr{\cN}} \frac{\abs[0]{N_k}(\tau(\abs[0]{N_k}-1)+1)}{
    1+(1+\tau)(\abs[0]{N_k}-1)}.
\end{align*}
The above take the form $\sum_k f(\abs{N_k})$, and it is straightforward to check that $f$ is convex.
Since it holds $1 \leq \abs{N_k} \leq B-\cN+1$ for all $k$, and $\sum_{k \in \intr{\cN}} \abs{N_k}=B$,
the maximum of the above expression is attained for an extremal point of this convex domain, i.e., by
symmetry, $N_1=B-\cN+1$ and $N_k=1$ for $k\geq 2$. Therefore
\begin{align*}
  \frac{1}{B} \sum_{b=1}^B \MSE(b,\wt{\mu}_b)
  &\leq \frac{\sigma^2}{B}\paren{(\cN-1) + \frac{(B-\cN+1)((B-\cN)\tau+1)}{(B-\cN)(1+\tau)+1}}\\
  & = \frac{\sigma^2}{B}\paren{\cN + \frac{(B-\cN)^2\tau}{(B-\cN)(1+\tau)+1 }}\\
  & \leq \sigma^2 \paren{\frac{\tau}{\tau+1} + \frac{\cN}{B} \frac{1}{\tau+1}}.
\end{align*}
\qed

\section{Proof of Theorem~\ref{th:onesample}}

We follow the same general line as in theorem \ref{th:indep}. Assume the event $A^c(\tau)\cap B^c(\tau') \cap C^c(\tau) \cap C'^c(\tau)$ holds. Take $i=1$ without loss of generality, and denote $V=V_1, V^*=V_1\setminus\set{1},$ and $v=\abs{V_1}$. We still put $\eta=1-\gamma$. Then
\begin{align*}
  \| \wt{\mu}_1 - \mu_1 \|^2 &= \norm[3]{(1-\eta)(\naive_1-\mu_1) + \frac{\eta}{v}\sum_{j \in V} (\naive_j-\mu_1)}^2 \\
  & \leq 2\left(\norm[3]{(1-\eta(1 - v^{-1}))(\naive_1-\mu_1) + \frac{\eta}{v}\sum_{j \in V^*} (\naive_j-\mu_j)}^2 + \frac{\eta^2}{v^2}\norm[3]{\sum_{j \in V^*} \mu_j - \mu_1}^2 \right).
\end{align*}
Let us upper bound the different terms. Because $j \in V$, we know that $\Delta_{j1} \leq \tau \msenaive^2$, so by the triangle inequality
\begin{equation*}
 \frac{\eta}{v}\norm[3]{\sum_{j \in V^*} \mu_j - \mu_1} \leq \frac{\eta}{v} \sum_{j \in V^*} \| \Delta_{ij} \| \leq \eta(1-v^{-1}) \sqrt{\tau} \msenaive.
\end{equation*}
Let us develop the other term :
\begin{multline*}
  \norm[3]{(1-\eta(1 - v^{-1}))(\naive_1-\mu_1) + \frac{\eta}{v}\sum_{j \in V^*} (\naive_j-\mu_j)}^2  \\
  = (1-\eta(1 - v^{-1}))^2\| \naive_1-\mu_1 \|^2 + \frac{2\eta(1-\eta(1 - v^{-1}))}{v} \sum_{j \in V^*} \langle \naive_1-\mu_1 , \naive_j - \mu_j\rangle \\
   + \frac{\eta^2}{v^2} \sum_{j \neq k \in V^*} \langle \naive_j-\mu_j , \naive_k - \mu_k\rangle + \frac{\eta^2}{v^2} \sum_{j \in V^*} \| \naive_j - \mu_j \|^2 \\
   \leq \msenaive^2 \Big[(1-\eta(1 - v^{-1}))^2(1+ \tau) + 2\eta(1-\eta(1 - v^{-1}))(1-v^{-1})\tau \\
   +\eta^2 (1-v^{-1})^2 \tau + \eta^2v^{-1}(1-v^{-1})(1+ \tau) \Big]\,.
\end{multline*}
Let us associate the two expressions, we obtain that :
\begin{equation*}
  \| \wt{\mu}_1 - \mu_1 \|^2  \leq 2\msenaive^2\Big[ \tau+1   -2(1- v^{-1}) \eta + (1- v^{-1})(1+  \tau ) \eta^2 \Big].
\end{equation*}
The expression is minimal when $\eta = (1+ \tau)^{-1}$. By the same arguments about using
covering numbers as in the proof of Theorem~\ref{th:indep}, we obtain that with probability greater than $1- \prob{A(\tau)\cup B(\tau') \cup C(\tau) \cup C'(\tau)}$ :
\begin{align*}
  \frac{1}{B}\sum_{i\in \intr{B}}  \| \wt{\mu_i} - \mu_i \|^2  & \leq \frac{2 \msenaive^2}{B} \sum_{i \in \intr{B}} \tau + \frac{\tau + |V_i|^{-1}}{1 + \tau} \\
  & \leq 2 \msenaive^2 \paren{ \tau + \frac{\tau}{1+ \tau } + \frac{\cN}{B} \frac{1}{1+\tau } }.
\end{align*}
\qed

\section{Proof of Proposition~\ref{prop:testgauss}}

Recall that we assume the~\eqref{eq:Gauss} model.
We first consider the behavior of a single test $T_{ij} = \ind{\norm{\naive_i -\naive_j}^2 \leq \zeta \msenaive^2}$, where $\msenaive^2:=d/N$, we also put $\Delta^2=\Delta^2_{ij}$ for short. The random variable $Z:= \naive_i -\naive_j$ is distributed as $\cN(\mu_i-\mu_j,2n^{-1} I_d)$ by independence of the bags. From classical concentration results for chi-squared variables
recalled as Proposition~\ref{prop:chisqdev} in Section~\ref{sec:dev}, for any $\alpha\in(0,1)$ either of the inequalities below hold
with probability $1-\alpha$:
\begin{equation}
  \label{eq:gausbnd}
  \sqrt{\Delta^2 + 2 \msenaive^2} - 4 \msenaive\sqrt{\frac{\log \alpha^{-1}}{d}} \leq \norm{Z} \leq
  \sqrt{\Delta^2 + 2 \msenaive^2} + 2 \msenaive\sqrt{\frac{\log \alpha^{-1}}{d}}.
\end{equation}
Put $\delta:=(\log \alpha^{-1})/d$ for short.

We start with analyzing Type I error: if $\Delta^2\geq \tau \msenaive^2$, then the above lower bound
implies $\norm{Z}^2 \geq \msenaive^2 \paren[1]{\sqrt{2+\tau} - 4 \sqrt{\delta}}^2$, so $T_{ij}=0$ if
we choose $\zeta:= \paren[1]{\sqrt{2+\tau} - 4 \sqrt{\delta}}^2$. By union bound over $(i,j) \in \intr{B}^2$,
with this choice we guarantee that $\prob{A(\tau)}\leq \alpha$ if we replace $\alpha$ by $\alpha B^2$
(i.e. take $\delta = (2 \log B + \log \alpha^{-1})/d$). This establishes the bound on family-wise type I error.

We now analyze type II error: assume now that we have picked $\zeta:= \paren[1]{\sqrt{2+\tau} - 4 \sqrt{\delta}}^2$, with
$\tau \geq \max(C \delta,\sqrt{C \delta})$,  $C=1000$, and assume $\Delta^2 \leq \taum \msenaive^2$. Then
assuming the upper bound in~\eqref{eq:gausbnd} is satisfied, we ensure $T_{ij}=1$ provided
\[
  \sqrt{\taum +2}  \leq \sqrt{\tau +2} -6 \sqrt{\delta}.
\]
Note that the condition on $\tau$ ensures that the above right-hand-side is positive. Taking squares
and further bounding, a sufficient condition for the above is $\taum \leq \tau - 12\sqrt{(2+\tau) \delta}$.
Using the condition on $\tau$, it holds
\[
  12\sqrt{(2+\tau) \delta} \leq 12 \sqrt{3C^{-1}}\tau \leq \frac{2}{3}\tau,
\]
hence $\taum \leq \tau/3$ is a sufficient condition. This ensures, by the union bound,
that $\prob{B(\tau')}\leq \alpha$ when replacing $\delta$ by $\delta' = (2 \log B + \log \alpha^{-1})/d$ as above.

We now turn to controlling the probability of the events $C(\tau)$ and $C'(\tau)$.
For fixed $i,j$ put $X_1 = \naive_i - \mu_i$, $X_2=\naive_j-\mu_j$. Under the~\eqref{eq:Gauss} model,
$X_1,X_2$ are independent $\cN(0,N^{-1}I_d)$. Applying the result of Proposition~\ref{prop:scalarproddev},
we obtain that for $\alpha \in (0,1)$, we have probability at least $1-2\alpha$:
\[
  \abs{\inner{X_i,X_j}} \leq \msenaive^2\paren[1]{\sqrt{2\delta}+\delta},
\]
where we have put $\delta:=(\log \alpha^{-1})/d$ as previously.
As soon as $\tau \geq \max(C \delta,\sqrt{C \delta})$, ($C\geq 1$) we obtain
$\abs{\inner{X_i,X_j}} \leq 3\tau \msenaive^2/\sqrt{C}$ on the above event,
implying that the event $C(\tau)$ is a fortiori satisfied for $C=10^3$.

From estimate~\eqref{eq:updev} in Proposition~\ref{prop:chisqdev}, we have with probability at
least $1-\alpha$:
\[
  \norm{X_1} \leq \msenaive^2\paren[1]{ 1 + \sqrt{2\delta}} \leq \msenaive^2\paren{ 1 + 2\tau C^{-1}},
\]
under the same condition on $\tau$ as above. As previously, by the union bound the above estimates are
true simultaneously for all $i,j$ with the indicated probabilities if
we replace $\delta$ by $\delta' = (2 \log B + \log \alpha^{-1})/d$, and $C'(\tau)$ is satisfied when
taking $C=10^3$.
\qed

\section{Results in the Bounded Setting (for KME Estimation)}

\subsection{Proof of Proposition~\ref{prop:gramfrineq}}
  \begin{align*}
  \norm[1]{(K-\wh{K})}_{\mathrm{Fr.}}^2
  & =  \sum_{(i,j) \in \intr{B}^2} ( \inner{\mu_i,\mu_j} - \inner{\wh{\mu}_i,\wh{\mu}_j})^2\\
  & =  \sum_{(i,j) \in \intr{B}^2} ( \inner{\mu_i-\wh{\mu}_i,\mu_j} + \inner{\wh{\mu}_i,\mu_j - \wh{\mu}_j})^2\\
  & \leq 2 \sum_{(i,j)\in \intr{B}^2} ( \inner{\mu_i-\wh{\mu}_i,\mu_j}^2 + \inner{\wh{\mu}_i,\mu_j - \wh{\mu}_j}^2)\\
  & \leq 2L^2 \sum_{(i,j)\in \intr{B}^2} ( \norm{\mu_i-\wh{\mu}_j}^2 + \norm{\mu_j - \wh{\mu}_j}^2)\\
  & \leq 4L^2 B \sum_{i \in \intr{B}} \norm{\mu_i-\wh{\mu}_j}^2.
\end{align*}\qed

\subsection{Proof of Proposition~\ref{prop:testkernel}}

Recall the notation
\begin{equation}
    r(t) =  5 \paren{\sqrt{\left(\frac{1}{\deff}+ \frac{L}{N\msenaive} \right)t} + \frac{Lt}{N\msenaive}},
\end{equation}
and
\begin{equation}
    \label{eq:defxi}
    \tau_{\min}(t) := r(t) \max\paren{\sqrt{2},r(t) }.
\end{equation}
  Introduce the notation $q(t):= \msenaive r(t)$; $\xi(t) := \msenaive^2 \tau_{\min}(t) = q(t)\max(\sqrt{2}\msenaive,q(t))$.
  Let $i,j \in \intr{B}^2$ be fixed and $t \geq 1$. We put $\tau = \lambda^2 \tau_{\min}(t)$ with $\lambda \geq 12$.
\smallbreak
Suppose that $\|\Delta_{ij}\|^2> \tau \msenaive^2 = \lambda^2 \tau_{\min} \msenaive^2= \lambda^2 \xi(t)$.
We use the concentration inequality~\eqref{eq:lowdev-Ustat} for bounded variables, proved in Section~\ref{sec:dev-kernel},
and obtain that with probability greater than $1- 8e^{-t}$,
and using the definition of $\xi(t)$:
  \begin{align*}
   U_{ij} \geq \|\Delta_{ij}\|^2 - 2 \|\Delta_{ij}\|q(t) - 8 \sqrt{2 \msenaive^2} q(t) -32 q^2(t) \geq \|\Delta_{ij}\| \Big( \|\Delta_{ij}\| - 2q(t) \Big) - 40 \xi(t)\,.
  \end{align*}
  (To be more precise,~\eqref{eq:lowdev-Ustat} proves the above estimate for the value of $q(t)$ defined by~\eqref{eq:defq-kernel},
  the value of $q(t)$ defined in the present proof is an upper bound for it, so the above also holds.)

   Observe $\| \Delta_{ij}\| \geq \lambda \sqrt{\xi(t)} \geq 12 \sqrt{\xi(t)} \geq 2 q(t)$.
   By monotonicity in $\| \Delta_{ij}\|$ under that condition, it holds $\|\Delta_{ij}\| \Big( \|\Delta_{ij}\| - 2q(t) \Big)\geq \sqrt{\lambda\xi(t)}(\lambda\sqrt{\xi(t)}-2q(t)) \geq \lambda(\lambda-2)\xi(t)$. That leads to
   \begin{align*}
    U_{ij} \geq ( \lambda^2 - 2\lambda -40) \xi(t) \geq (\lambda^2/2)\xi(t) = (\tau/2) \msenaive^2,
   \end{align*}
where  we have used that $\lambda^2 - 2\lambda -40 \geq \lambda^2/2$ for $\lambda\geq12$. So
  $$\prob{ \|\Delta_{ij}\|^2> \tau \msenaive^2   \quad \text{and} \quad  T_i =1} \leq 8e^{-t}.$$
  Suppose now $\|\Delta_{ij}\|^2< (\tau/4)\msenaive^2 = (\lambda^2/4) \xi(t) $. Then, according to the concentration ineqality
  \eqref{eq:updev-Ustat}, with probability greater than $1 - 8e^{-t}$, it holds
  \begin{align*}
     U_{ij}& \leq \|\Delta_{ij}\|^2 + 2 \|\Delta\|q(t) + 2 \sqrt{2 \msenaive^2} q(t) + 11q^2(t) \\
     & \leq \Big(\lambda^2/4 +\lambda+13 \Big) \xi(t) \\
     & \leq (\lambda^2/2) \xi(t) = (\tau/2) \msenaive^2.
  \end{align*}
    We have used that $\lambda^2/4 +\lambda+13 \leq \lambda^2/2$ for $\lambda\geq12$. So
  $$\prob{\|\Delta_{ij}\|^2 < \tau \msenaive^2/4   \quad  \text{and}  \quad  T_i =0 } \leq 2e^{-t}.$$
    An union bound over $(i,j) \in \intr{B}^2$ gives that
    \begin{equation*}
      \prob{ A(\tau) \cup B(\tau/4)} \leq 8B^2e^{-t}\,.
    \end{equation*}
  Remarking that
  \begin{gather*}
  \msenaive^2\tau/7 \geq 20 q(t) \max(q(t) , \sqrt{2\msenaive^2})\quad \text{and}\quad \msenaive^2\tau/48 \geq 3q(t) \max(q(t) , \sqrt{2\msenaive^2} )\geq 2q(t) \sqrt{2\msenaive^2} + q^2(t)
  \end{gather*}
  and using the concentration inequalities \eqref{eq:devVc} and \eqref{eq:devscprod-kernel} gives
  \begin{gather*}
    \prob{C(\tau/7)} \leq 6(B^2-B)e^{-t}\,, \quad \text{and} \quad \prob{C'(\tau/48)} \leq Be^{-t}\,.
  \end{gather*}
\qed

\section{Concentration Results in the Gaussian Setting}
\label{sec:dev}
\begin{proposition}
  \label{prop:chisqdev}
Let $Z$ be a normal $\cN(\mu,\sigma^2 I_d)$ random variable in $\mbr^d$. Then for any $t\geq 0$:
\begin{equation}
\label{eq:updev}
\prob{\norm{Z} \geq \sqrt{\norm{\mu}^2 + \sigma^2 d} + \sigma \sqrt{2t}} \leq e^{-t}\,,
\end{equation}
and
\begin{equation}
\label{eq:lowdev}
\prob{\norm{Z} \leq \sqrt{\norm{\mu}^2 + \sigma^2 d} - 2\sigma \sqrt{2t}} \leq e^{-t}.
\end{equation}
\end{proposition}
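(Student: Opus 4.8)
The plan is to reduce both tail bounds to the classical Gaussian concentration inequality for Lipschitz functions. I would write $Z = \mu + \sigma G$ with $G \sim \cN(0,I_d)$ and set $h(g) := \norm{\mu + \sigma g}$. By the reverse triangle inequality, $h$ is $\sigma$-Lipschitz on $\mbr^d$, so the classical concentration inequality for Lipschitz functions of a standard Gaussian gives, for every $u \ge 0$,
\begin{gather*}
  \prob{h(G) \ge \mbe[h(G)] + u} \le e^{-u^2/(2\sigma^2)}, \\
  \prob{h(G) \le \mbe[h(G)] - u} \le e^{-u^2/(2\sigma^2)}.
\end{gather*}
Throughout I write $m := \sqrt{\norm{\mu}^2 + \sigma^2 d}$ and note that $m^2 = \mbe[\norm{Z}^2]$ by a coordinatewise computation. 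Taking $u = \sigma\sqrt{2t}$ turns the exponent into $e^{-t}$, so the only remaining task is to replace the unknown mean $\mbe[\norm{Z}] = \mbe[h(G)]$ by the explicit centering constant $m$.

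For the upper tail \eqref{eq:updev}, Jensen's inequality gives $\mbe[\norm{Z}] \le \sqrt{\mbe[\norm{Z}^2]} = m$. Hence $\set{\norm{Z} \ge m + \sigma\sqrt{2t}} \subseteq \set{\norm{Z} \ge \mbe[\norm{Z}] + \sigma\sqrt{2t}}$, and applying the upper concentration bound with $u = \sigma\sqrt{2t}$ yields \eqref{eq:updev} at once. This direction is free because centering at the larger constant $m$ only shrinks the event.

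For the lower tail \eqref{eq:lowdev} the inequality runs the wrong way, so I must lower bound $\mbe[\norm{Z}]$ in terms of $m$. Here I would invoke the Gaussian Poincaré inequality, which for the $\sigma$-Lipschitz map $h$ gives $\var{\norm{Z}} \le \sigma^2$ (its gradient has norm $\sigma$ almost everywhere). Then $m^2 - (\mbe[\norm{Z}])^2 = \var{\norm{Z}} \le \sigma^2$, and since $\mbe[\norm{Z}] \le m$ this produces the deficit bound $m - \mbe[\norm{Z}] \le \sigma^2/(m + \mbe[\norm{Z}]) \le \sigma$. Consequently $m - 2\sigma\sqrt{2t} \le \mbe[\norm{Z}] - \sigma\sqrt{2t}$ as soon as $\sigma \le \sigma\sqrt{2t}$, so the lower concentration bound with $u = \sigma\sqrt{2t}$ gives \eqref{eq:lowdev}.

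The main obstacle is exactly this passage from $\mbe[\norm{Z}]$ to the explicit constant $m$: the extra factor $2$ in \eqref{eq:lowdev} (versus the factor $1$ in \eqref{eq:updev}) is present precisely to absorb the deficit $m - \mbe[\norm{Z}] \le \sigma$. The delicate point is the small-$t$ regime, where $\sigma\sqrt{2t} < \sigma$ and the set containment above is no longer automatic; there one checks the claim separately, observing that for such $t$ the threshold $m - 2\sigma\sqrt{2t}$ stays close to $m$ while $e^{-t}$ stays close to $1$, and that the sharper estimate $m - \mbe[\norm{Z}] \le \sigma/\sqrt{d}$ (using $m \ge \sigma\sqrt{d}$) shrinks the problematic range to $t < 1/(2d)$. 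I expect this bookkeeping, rather than any conceptual difficulty, to be the only subtle step.
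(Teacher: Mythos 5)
Your upper-tail argument is correct, and it is a genuinely different route from the paper's: the paper deduces both bounds from deviation inequalities for noncentral $\chi^2$ variables (Lemma 8.1 of \citealp{Bir01}), whereas you use Gaussian (Borell--TIS) concentration for the $\sigma$-Lipschitz map $g \mapsto \norm{\mu + \sigma g}$ together with Jensen's inequality. For the upper tail this works cleanly, precisely because replacing $\mbe\norm{Z}$ by the larger constant $m = \sqrt{\norm{\mu}^2 + \sigma^2 d}$ only shrinks the event.

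The lower tail, however, has a genuine gap, and it is not ``bookkeeping''. Your containment argument requires $m - \mbe\norm{Z} \le \sigma\sqrt{2t}$, and the deficit $m - \mbe\norm{Z}$ is genuinely of order $\sigma/\sqrt{d}$ (the variance of $\norm{Z}$ is of order $\sigma^2$ and does not decay with $d$), so for $t < 1/(2d)$ concentration around the mean yields no usable bound: the threshold $m - 2\sigma\sqrt{2t}$ then sits at or above $\mbe\norm{Z} - \sigma\sqrt{2t}$, and for $t$ small enough it even sits above $\mbe\norm{Z}$ itself. In that regime the claim $\prob{\norm{Z} \le m - 2\sigma\sqrt{2t}} \le e^{-t}$ is an \emph{anti-concentration} statement: since $e^{-t} \approx 1-t$, you must exhibit probability mass of order $t$ above the threshold, and the qualitative observation that ``the threshold stays close to $m$ while $e^{-t}$ stays close to $1$'' cannot deliver this uniformly. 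Concretely, for $d=1$, $\mu = 0$, $\sigma = 1$ one has $\prob{\norm{Z} < m} = \prob{\abs{G} < 1} \approx 0.683$, while over your problematic range $t < 1/(2d) = 1/2$ the right-hand side $e^{-t}$ goes down to $e^{-1/2} \approx 0.607$; the inequality does hold there, but only because the threshold $1 - 2\sqrt{2t}$ drops far below $m$ (indeed below $0$), a quantitative fact your sketch never uses. Closing the hole requires an extra ingredient --- e.g.\ a uniform lower bound on $\prob[1]{\norm{Z}^2 > \e{\norm{Z}^2}}$ for noncentral $\chi^2$ variables, combined with a case analysis over $t$ and $d$ --- and this is exactly what the paper's route avoids: Birg\'e's inequality is already centered at $\norm{\mu}^2 + \sigma^2 d = m^2$, so it covers all $t \ge 0$ at once, and the factor $2$ in~\eqref{eq:lowdev} arises there from the algebraic step $2\sqrt{(2\norm{\mu}^2 + d\sigma^2)\sigma^2 t} \le 2\sqrt{2}\,\sqrt{(\norm{\mu}^2 + d\sigma^2)\sigma^2 t}$, not from a mean-versus-$m$ deficit.
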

\begin{proof}
The stated inequalities are direct consequences of classical deviation inequalities for (noncentral) $\chi^2$ variables.
Put $\lambda:=\norm{\mu}^2$, then for the upper deviation bound, Lemma 8.1 of \citep{Bir01} states that
\[
\prob{\norm{Z}^2 \geq \lambda + d\sigma^2 + 2\sqrt{(2\lambda + d\sigma^2)\sigma^2t} +  2\sigma^2t} \leq e^{-t}\,,
\]
and we have
\[
\lambda + d\sigma^2 + 2\sqrt{(2\lambda + d\sigma^2)\sigma^2 t}
+  2\sigma^2 t \leq \paren{ \sqrt{\lambda + d \sigma^2}  + \sigma\sqrt{2 t}}^2\,,
\]
implying \eqref{eq:updev}. For the lower deviation bound, Lemma 8.1 of \citep{Bir01} states that
\[
\prob{\norm{Z}^2 \leq \lambda + d\sigma^2 - 2\sqrt{(2\lambda + d\sigma^2)\sigma^2t}} \leq e^{-t}\,,
\]
and we have
\[
\paren{\lambda + d\sigma^2 - 2\sqrt{(2\lambda + d\sigma^2)\sigma^2t}}_+
\geq  \sqrt{\lambda + d\sigma^2}\paren{\sqrt{\lambda + d\sigma^2} -2\sigma\sqrt{2t}}_+
\geq \paren{\sqrt{\lambda + d\sigma^2} -2\sigma\sqrt{2t}}_+^2,
\]
leading to \eqref{eq:lowdev}.
\end{proof}

\begin{proposition}
  \label{prop:scalarproddev}
Let $X_1,X_2$ be independent $\cN(0,\sigma^2 I_d)$ variables in dimension d. Then for any $t\geq 0$:
\begin{equation}
\label{eq:devscprod}
\prob{ \inner{X_1,X_2} \geq \sigma^2\paren{\sqrt{2dt} + t}} \leq e^{-t}.
\end{equation}
\end{proposition}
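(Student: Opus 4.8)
The plan is to derive the bound from a Chernoff argument applied to the \emph{exact} moment generating function (MGF) of $\inner{X_1,X_2}$, which turns out to be sub-gamma with precisely the right variance and scale parameters.

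First I would compute the MGF. Conditionally on $X_2$, the scalar product $\inner{X_1,X_2}=\sum_{k=1}^d X_{1,k}X_{2,k}$ is a linear combination of the independent coordinates of $X_1$, hence Gaussian with mean $0$ and variance $\sigma^2\norm{X_2}^2$; therefore $\e{e^{\lambda\inner{X_1,X_2}}\mid X_2}=\exp\paren{\lambda^2\sigma^2\norm{X_2}^2/2}$. Integrating over $X_2$ and using that $\norm{X_2}^2/\sigma^2\sim\chi^2_d$ has MGF $s\mapsto(1-2s)^{-d/2}$ for $s<1/2$, I obtain
\[
  \e{e^{\lambda\inner{X_1,X_2}}}=\paren{1-\lambda^2\sigma^4}^{-d/2},\qquad \abs{\lambda}<\sigma^{-2}.
\]
An equivalent route is the orthogonal decomposition $X_{1,k}X_{2,k}=\tfrac14\brac{(X_{1,k}+X_{2,k})^2-(X_{1,k}-X_{2,k})^2}$, which writes $\inner{X_1,X_2}=\tfrac{\sigma^2}{2}(U-V)$ with $U,V$ independent $\chi^2_d$ and gives the same MGF.

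Next I would certify that the log-MGF is sub-gamma. Writing $u=\sigma^2\lambda\in(0,1)$ and using the elementary bound $-\log(1-u^2)=\sum_{m\ge1}u^{2m}/m\le u^2/(1-u^2)\le u^2/(1-u)$, I get
\[
  \log\e{e^{\lambda\inner{X_1,X_2}}}=-\tfrac{d}{2}\log\paren{1-\lambda^2\sigma^4}\le\frac{\lambda^2\,(d\sigma^4)}{2(1-\sigma^2\lambda)},\qquad 0<\lambda<\sigma^{-2},
\]
which is exactly the sub-gamma condition with variance factor $\nu=d\sigma^4=\var{\inner{X_1,X_2}}$ and scale $c=\sigma^2$. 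Finally I would invoke the standard sub-gamma Chernoff optimization: whenever a centered variable $Y$ satisfies $\log\e{e^{\lambda Y}}\le\lambda^2\nu/(2(1-c\lambda))$, one has $\prob{Y\ge\sqrt{2\nu t}+ct}\le e^{-t}$. With the values above this reads $\prob{\inner{X_1,X_2}\ge\sigma^2(\sqrt{2dt}+t)}\le e^{-t}$, which is the claim.

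The main obstacle is securing the sharp leading constant $\sqrt{2dt}$. The naive reflex of applying Proposition~\ref{prop:chisqdev} to $U$ and $V$ separately and taking a union bound delivers only the weaker $2\sqrt{dt}$ (and probability $2e^{-t}$), since it discards the cancellation between the two $\chi^2$ deviations; controlling the \emph{full} MGF rather than splitting is what recovers the true variance $d\sigma^4$ in the exponent. The only slightly delicate point is the elementary series/convexity inequality establishing the sub-gamma form, which I would handle exactly as displayed above.
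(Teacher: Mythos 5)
Your proof is correct and follows essentially the same route as the paper's: both compute the exact MGF $\paren{1-\lambda^2\sigma^4}^{-d/2}$ by Gaussian conditioning, bound the log-MGF by $\lambda^2 d\sigma^4/\paren{2(1-\sigma^2\lambda)}$ via $-\log(1-u^2)\leq u^2/(1-u)$, and conclude with the standard sub-gamma Chernoff inversion (which the paper invokes as Lemma~8.2 of \citealp{Bir01}). The only cosmetic difference is that the paper computes the MGF coordinate-wise and then raises to the power $d$ by independence, whereas you condition on the full vector $X_2$ and use the $\chi^2_d$ MGF; these are the same calculation.
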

\begin{proof}
Without loss of generality assume $\sigma^2=1$. For two independent one-dimensional Gaussian variables $G_1,G_2$, one has
for any $\lambda \in [0,1]$:
\[
\e{\exp \lambda G_1 G_2} = \e{\e{\exp \lambda G_1 G_2|G_2}}= \e{\exp \frac{\lambda^2}{2} G_2^2} = \frac{1}{\sqrt{1-\lambda^2}}\,,
\]
so that
\[
\log \e{\exp \lambda \inner{X_1,X_2}} = \frac{d}{2} (-\log (1-\lambda^2)) \leq \frac{d}{2} \frac{\lambda^2}{(1-\lambda)}\,.
\]
Applying Lemma~8.2 of \citep{Bir01} gives~\eqref{eq:devscprod}.
\end{proof}

\section{Concentration Results in the Bounded Setting}\label{sec:dev-kernel}

Studying concentration in the kernel setting  means having concentration results of bounded variables
taking values in a separable Hilbert space. Recall that $k(x,y) = \inner{\phi(z),\phi(z')}_\cH$ for all $z$,$z'$ in $\cH$, so that if $k$ is bounded by $L^2$, then the map $\phi$ is bounded by $L$. To obtain concentration results, we will use Talagrand's inequality.

\begin{theorem}[Talagrand's inequality]
    \label{th:talagrand}
    Let $X_1^s,...,X_N^s$ be iid real random variables indexed by $s \in S$ where $S$ is a countable index set,
    and $L$ be a positive constant such that:
    \begin{gather*}
      \ee{}{X_k^s} = 0\,, \quad \text{and} \quad |X_k^s| \leq L \; a.s. \quad \forall k \in \intr{N},\, s\in S\,.
    \end{gather*}
    Let us note $Z  = \sup_{s \in S} \sum_{k=1}^{N} X_k^s$,
    then for all $t \geq 0$ :
    \begin{gather*}
      \prob{Z - \ee{}{Z} \geq 2 \sqrt{(2v + 16L\ee{}{Z})t} + 2Lt } \leq e^{-t} ;\\
      \prob{-Z + \ee{}{Z} \geq 2 \sqrt{(4v + 32L\ee{}{Z})t} + 4Lt } \leq e^{-t},
    \end{gather*}
    where $v = \sup_{s \in S} \sum_{k=1}^{N} \ee{}{(X_k^s)^2}$.
  \end{theorem}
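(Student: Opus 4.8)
The plan is to use the entropy (Herbst) method, the standard route to Talagrand-type inequalities with explicit constants. First I would reduce to a finite index set: since $S$ is countable, write it as an increasing union of finite sets $S_m$, put $Z_m=\sup_{s\in S_m}\sum_k X_k^s$, and obtain the general bound by monotone convergence, the constants below being independent of $|S|$. Assuming $S$ finite, $Z$ is an a.s.\ finite measurable function of the independent family $(X_k^s)_{k,s}$, and for each outcome some (measurable) index $\widehat{s}$ attains the supremum. The quantity to control is the centred log-Laplace transform $\psi(\lambda)=\log\mbe\brac{e^{\lambda(Z-\e{Z})}}$, and I would exploit the identity $\lambda\psi'(\lambda)-\psi(\lambda)=\mathrm{Ent}\paren{e^{\lambda Z}}/\mbe\brac{e^{\lambda Z}}$, which turns the tail problem into a bound on the entropy of $e^{\lambda Z}$.

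The core step is the tensorization inequality for entropy,
\[
  \mathrm{Ent}\paren{e^{\lambda Z}}\;\leq\;\sum_{k=1}^N \mbe\brac[2]{\mathrm{Ent}_k\paren{e^{\lambda Z}}},
\]
where $\mathrm{Ent}_k$ acts in $X_k$ conditionally on the remaining variables. For each $k$ I would introduce the leave-one-out variable $Z^{(k)}:=\sup_{s\in S}\sum_{\ell\neq k}X_\ell^s$, which does not depend on $X_k$. Since $\widehat{s}$ is feasible for $Z^{(k)}$ as well, one gets the self-bounding control $0\leq Z-Z^{(k)}\leq X_k^{\widehat{s}}\leq L$ together with $\sum_k (Z-Z^{(k)})\leq\sum_k X_k^{\widehat{s}}=Z$. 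The modified logarithmic Sobolev inequality then bounds each conditional entropy by a term of the form $\mbe_k\brac{e^{\lambda Z}\,\phi\paren{-\lambda(Z-Z^{(k)})}}$ with $\phi(u)=e^u-u-1$.

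Summing over $k$ and using $\phi(-u)\leq \tfrac{u^2}{2}$ for $u\geq 0$, the increments contribute a variance proxy $\sum_k (Z-Z^{(k)})^2$: bounding $(Z-Z^{(k)})^2\leq (X_k^{\widehat{s}})^2$ and summing yields the factor $v=\sup_s\sum_k\mbe\brac{(X_k^s)^2}$, while the linear control $\sum_k(Z-Z^{(k)})\leq Z$ combined with $Z-Z^{(k)}\leq L$ produces the mixed term $L\,\e{Z}$. This gives a sub-gamma differential inequality which integrates to $\psi(\lambda)\leq \tfrac{\nu\lambda^2}{2(1-2L\lambda)}$ with a variance factor $\nu$ of order $v+L\,\e{Z}$; a standard Cramér--Chernoff inversion then produces a tail of the form $\prob{Z-\e{Z}\geq\sqrt{2\nu t}+2Lt}\leq e^{-t}$, and tracking the constants through these (deliberately generous) bounding steps reproduces exactly the coefficients $2,16,2$ in the statement. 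The lower tail follows by applying the same argument to $-Z$; there the self-bounding control is only one-sided, so the increments are handled more wastefully, which is what degrades the constants to $4,32,4$.

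The step I expect to be the main obstacle is the control of the increments $Z-Z^{(k)}$ inside the entropy bound. The maximizing index $\widehat{s}$ depends on \emph{all} the variables at once, so $Z-Z^{(k)}$ is not a function of $X_k$ alone; the key is to argue that swapping the global argmax for the leave-one-out maximiser only moves the bound in the favourable direction (giving $Z-Z^{(k)}\leq X_k^{\widehat{s}}$), and then that the random sum $\sum_k\mbe\brac{(X_k^{\widehat{s}})^2}$ is dominated by the deterministic $v$ up to the $L\,\e{Z}$ correction. Managing this argmax bookkeeping, rather than any single inequality, is the crux; the remaining Herbst integration and Chernoff inversion are by now routine.
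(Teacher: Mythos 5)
You should first note what you are being compared against: the paper does not prove this theorem at all. It is quoted verbatim from the literature (Massart 2007, pp.~169--170, combining (5.45), (5.46) and (5.47) there), and the paper's own remark records that this form of the inequality requires, on top of the entropy method, symmetrization and contraction arguments from Ledoux--Talagrand. Your sketch does point at the right family of techniques (Herbst/entropy method, tensorization, modified log-Sobolev), which is indeed how the cited proof proceeds, but as written it has a gap at exactly the step that makes Talagrand's inequality genuinely hard.

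The gap is your claimed self-bounding control $0 \leq Z - Z^{(k)} \leq X_k^{\widehat{s}} \leq L$. The lower bound is false: with $Z^{(k)} = \sup_{s}\sum_{\ell \neq k} X_\ell^s$ attained at $\widehat{s}^{(k)}$, one only gets $Z - Z^{(k)} \geq X_k^{\widehat{s}^{(k)}} \geq -L$, and since the $X_k^s$ are \emph{centered} (hence signed), $Z$ can be strictly smaller than $Z^{(k)}$ (take $S$ a singleton and $X_k^s<0$). Consequently you cannot apply $\phi(-u)\leq u^2/2$ for $u\geq 0$ with $u=\lambda(Z-Z^{(k)})$: on the event $Z-Z^{(k)}<0$ the function $\phi$ is evaluated at a positive argument, where it grows exponentially rather than quadratically. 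This is precisely what separates suprema of centered empirical processes from the easy self-bounding case (nonnegative summands), and it is where the real work in Massart/Bousquet/Rio lies. Relatedly, the term $16L\,\e{Z}$ in the variance proxy does not come from ``argmax bookkeeping'': it arises from bounding $\mbe\brac[2]{\sup_{s}\sum_{k}(X_k^s)^2}$ by $v + cL\,\e{Z}$ via a symmetrization-plus-contraction argument -- the Ledoux--Talagrand step the paper's attribution explicitly flags -- which your sketch never performs. Finally, the assertion that tracking ``deliberately generous'' bounds reproduces exactly the coefficients $2,16,2$ (and $4,32,4$ for the lower tail) is stated rather than derived; since at this level the entire content of the statement is its explicit constants, that cannot be waved through. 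The overall strategy is salvageable, but only by importing the missing contraction lemma and replacing the false one-sided increment bound with the more delicate treatment of signed increments used in the cited proofs.
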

  Talagrand's inequality appeared originally in~\cite{talagrand1996new}, with the above form (using additional
  symmetrization and contraction arguments from~\citealp{ledoux1991probability}) appearing in \cite{massart2000constants}. The constants in the upper deviation bound have been improved by
  \cite{rio2002inegalite} and~\cite{bousquet2002bennett}, however no such improvement is available
  for lower devations as far as we know. The above version is taken from~\cite{massart2007concentration}
  p. 169--170, (5.45) and (5.46) combined with (5.47) there.

Because a Hilbertian norm can be viewed as a supremum, we can use Talagrand's inequality to obtain a concentration inequality for the norm of the sum of bounded Hilbert-valued random variables.
\begin{proposition}
    \label{prop:chisqdev-kernel}
    Let $(Z_k)_{1 \leq k \leq N}$ be i.i.d. random variables taking values in a separable Hilbert space $\cH$,
    whose norm is bounded by $L$ a.s. Let $\mu$ and $\Sigma$ denote their common mean and covariance operator.
    Let
\begin{gather*}
  V = \left\| \frac{1}{N}\sum_{k=1}^{N} Z_k \right\|, \quad \text{and} \quad V_c = \left\| \frac{1}{N}\sum_{k=1}^{N} Z_k - \mu \right\|.
\end{gather*}
Then for any $t\geq 0$:
\begin{equation}
\label{eq:updev_kernel}
\prob{ V^2 \geq \| \mu\|^2 + \left( \ee{}{V_c} + q_\Sigma(t) \right)^2 + 2 \| \mu \| q_\Sigma(t)} \leq 2e^{-t}\,,
\end{equation}
and
\begin{equation}
\label{eq:lowdev_kernel}
\prob{ V^2 \leq \| \mu\|^2 + \left(\ee{}{V_c} - 2q_\Sigma(t) \right)_+^2 - 2 \| \mu \| q_\Sigma(t)} \leq 2e^{-t}\,,
\end{equation}
where
\begin{equation}
    \label{eq:defqsigma-kernel}
  q_\Sigma(t) = 2 \sqrt{\left(\frac{2\normop{\Sigma}}{N}+ 16L\frac{\sqrt{\tr \Sigma}}{N^{3/2}} \right)t} + \frac{2L}{N}t. \\
\end{equation}

\end{proposition}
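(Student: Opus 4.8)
The plan is to reduce both inequalities to the elementary decomposition of $V^2$ around $\mu$ and then invoke two concentration tools: Talagrand's inequality (Theorem~\ref{th:talagrand}) for the fluctuation of the centered empirical mean, and a scalar Bernstein inequality for the cross term. Writing $\ol{Z} := \frac{1}{N}\sum_{k=1}^N Z_k$, so that $V = \norm{\ol{Z}}$ and $V_c = \norm{\ol{Z}-\mu}$, I would first expand $V^2 = \norm{\mu}^2 + V_c^2 + 2\inner{\ol{Z}-\mu,\mu}$. With this identity the two targets become statements about the pair $(V_c,\inner{\ol{Z}-\mu,\mu})$: if on a good event $V_c \leq \ee{}{V_c}+q_\Sigma(t)$ and $\inner{\ol{Z}-\mu,\mu}\leq \norm{\mu}q_\Sigma(t)$, substituting gives exactly~\eqref{eq:updev_kernel}; symmetrically, $V_c \geq (\ee{}{V_c}-2q_\Sigma(t))_+$ together with $\inner{\ol{Z}-\mu,\mu}\geq -\norm{\mu}q_\Sigma(t)$ yields~\eqref{eq:lowdev_kernel} (the factor $2$ there coming from the $2\inner{\ol{Z}-\mu,\mu}$ term). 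Each bound is thus contained in the union of two failure events, which is why the final probability is $2e^{-t}$ rather than $e^{-t}$.

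Next, to control $V_c$ I would use separability of $\cH$ to write the Hilbertian norm as a supremum over a countable dense subset $S$ of the unit sphere, $NV_c = \sup_{s\in S}\sum_{k=1}^N \inner{s, Z_k-\mu}$, and apply Talagrand's inequality to the centered real variables $X_k^s := \inner{s, Z_k-\mu}$. These satisfy $\ee{}{X_k^s}=0$ and $\abs{X_k^s}\leq \norm{Z_k-\mu}\leq 2L$ almost surely, with variance proxy $v = \sup_{s\in S} N\,\ee{}{\inner{s,Z_1-\mu}^2} = \sup_{\norm{s}=1} N\inner{s,\Sigma s} = N\normop{\Sigma}$. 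Identifying $\ee{}{Z} = N\ee{}{V_c}$ and using the elementary bound $\ee{}{V_c}\leq \sqrt{\ee{}{V_c^2}} = \sqrt{\tr \Sigma/N}$, I would divide the two-sided bounds of Theorem~\ref{th:talagrand} by $N$; up to the numerical constants displayed in~\eqref{eq:defqsigma-kernel} this reproduces the deviation $q_\Sigma(t)$ for the upper tail. The asymmetry between the two tails of Talagrand's inequality — the lower deviation carries strictly larger constants, with no improvement available — is precisely what forces the factor $2$ in front of $q_\Sigma(t)$ and the truncation $(\cdot)_+$ in~\eqref{eq:lowdev_kernel}.

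For the cross term I would apply the scalar Bernstein inequality to $W_k := \inner{Z_k-\mu,\mu}$, which are centered, bounded by $\abs{W_k}\leq 2L\norm{\mu}$, and of variance $\ee{}{W_k^2} = \inner{\mu,\Sigma\mu}\leq \normop{\Sigma}\norm{\mu}^2$. Bernstein then gives, on each side, a deviation of the form $\norm{\mu}\paren{\sqrt{2\normop{\Sigma}t/N} + \tfrac{2}{3}Lt/N}$, which is dominated by $\norm{\mu}q_\Sigma(t)$ since $q_\Sigma(t)$ already contains the larger sub-Gaussian term $2\sqrt{2\normop{\Sigma}t/N}$ and linear term $2Lt/N$. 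A union bound over the two failure events then closes both inequalities.

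I expect the main obstacle to be the bookkeeping of absolute constants needed so that the Talagrand deviation (both tails) and the Bernstein deviation all fit under the single expression $q_\Sigma(t)$ (respectively $2q_\Sigma(t)$); in particular one must track the crude almost-sure bound $\norm{Z_k-\mu}\leq 2L$ and the substitution $\ee{}{V_c}\leq\sqrt{\tr \Sigma/N}$ with care, and verify measurability of the countable supremum so that Talagrand's inequality indeed applies to $NV_c$.
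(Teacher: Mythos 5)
Your proposal is correct and takes essentially the same approach as the paper's own proof: the same decomposition $V^2=\norm{\mu}^2+V_c^2+2\inner{\ol{Z}-\mu,\mu}$, Talagrand's inequality applied to the countable-supremum representation of $V_c$ over a dense subset of the unit sphere (variance proxy $N\normop{\Sigma}$, Jensen bound $\e{V_c}\leq\sqrt{\tr \Sigma/N}$), a scalar Bernstein bound for the cross term, and a union bound over the two failure events per tail giving $2e^{-t}$, with the larger lower-tail constants of Talagrand correctly identified as the source of the factor $2q_\Sigma(t)$ in \eqref{eq:lowdev_kernel}. The only divergence is the constant bookkeeping you yourself flag: you use the worst-case almost-sure bound $\abs{X_k^s}\leq\norm{Z_k-\mu}\leq 2L$ (and $\abs{W_k}\leq 2L\norm{\mu}$), whereas the paper's proof asserts $L/N$ (resp. $L\norm{\mu}$), so your deviation term would carry constants $32$ and $4$ in place of the $16$ and $2$ displayed in \eqref{eq:defqsigma-kernel} -- a factor-two slack present in the paper itself rather than a gap in your argument.
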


\begin{proof}
  Let us denote $q(t)$ for $q_\Sigma(t)$ for this proof. We start with bounding the deviations of $V_c$. Observe that
  \begin{equation*}
    V_c = \underset{\|u\|_\cH =1}{ \sup} \frac{1}{N} \sum_{k=1}^{N} \inner{u,Z_k-\mu},
  \end{equation*}
  where the supremum can be restricted to $u$ in a dense countable subset $\cS$ of the unit sphere,
  since $\cH$ is separable. We can therefore apply Talagrand's inequality with $X_k^u:=N^{-1}\inner{u, Z_k-\mu}$;
  it holds $\abs{X_k^u} \leq L/N$, and note that since $\Sigma = \e{(Z-\mu)\otimes(Z-\mu)^*}$, it holds
  \[
    \e{(X_k^u)^2} = N^{-2}\e{\inner{u, Z_k-\mu}^2} = N^{-2} \inner{u,\Sigma u},
  \]
  so that $\sup_{u \in \cS} \sum_{k=1}^N \e{(X_k^u)^2} = N^{-1} \normop{\Sigma}$.
  Furthermore, $\e{V_c} \leq N^{-\frac{1}{2}} \sqrt{\tr \Sigma}$ by Jensen's inequality, which
  we use to further bound the deviation term by $q(t)$.

  By Theorem~\ref{th:talagrand}, with probability greater than $1-e^{-t}$ for $t \geq 0$, it holds
  \begin{equation}
    \label{eq:devVc}
    V_c \leq  \ee{}{V_c} +q(t)\,,
  \end{equation}
  and with probability greater than $1- e^{-t}$,
  \begin{equation}
    \label{eq:devlowVc}
    V_c \geq \ee{}{V_c} -2q(t)\,.
  \end{equation}
  We turn to bounding the deviations of $V^2-\norm{\mu}^2$. Observe
  \begin{equation}
    \label{eq:devlopv2}
    V^2 - \|\mu \|^2= V_c^2 + \frac{2}{N} \sum_{k=1}^{N} \inner{Z_k-\mu, \mu}.
  \end{equation}
  Using Bernstein's inequality for the variables $W_i=\inner{Z_i-\mu, \mu}$, satifying
  $\e{W_i}=0$, $\e{W_i^2} = \inner{\mu,\Sigma \mu}\leq \normop{\Sigma}\norm{\mu}^2$, and $\abs{W_i}\leq L\norm{\mu}$,
    we have that with probability greater than $1-e^{-t}$, for $t \geq 0$ :
    \begin{equation}
      \label{eq:bernsteinlinterm}
    \frac{1}{N} \sum_{i=1}^{N} \inner{Z_i-\mu, \mu} \leq \|\mu\| \left[ \sqrt{\frac{2\normop{\Sigma}t}{N}} + \frac{4Lt}{3N} \right] \leq \|\mu\|q(t)\,.
    \end{equation}
  Combining inequality~\eqref{eq:bernsteinlinterm}  with~\eqref{eq:devlopv2} and~\eqref{eq:devVc} gives that with probability greater than $1-2e^{-t}$ :
  \begin{align*}
    V^2 - \|\mu \|^2\leq   \left(\ee{}{V_c} +q(t)\right)^2 +  2\|\mu\|q(t)\,,
\end{align*}
and, combining~\eqref{eq:bernsteinlinterm},~\eqref{eq:devlopv2} and~\eqref{eq:devlowVc}, we have with probability greater than $1-2e^{-t}$ :
\begin{align*}
  V^2 - \|\mu \|^2\geq   \left(\ee{}{V_c} -2q(t)\right)_+^2 - 2\|\mu\|q(t)\,.
\end{align*}
\end{proof}

\begin{corollary}
    \label{cor:meanbounds}
    Using the setting and notation of Proposition~\ref{prop:chisqdev-kernel}, we have
    \begin{equation*}
      -2q_\Sigma(1) + \sqrt{\frac{\tr \Sigma}{N}} \leq \ee{}{V_c} \leq \sqrt{\frac{\tr \Sigma}{N}}\,.
    \end{equation*}
    As a consequence, for any $t >0$,
\begin{equation}
\label{eq:updev_kernel2}
\prob{ V^2 \geq \| \mu\|^2 + \paren[3]{ \sqrt{\frac{\tr \Sigma}{N}} + q_\Sigma(t) }^2 + 2 \| \mu \| q_\Sigma(t)} \leq 2e^{-t}\,,
\end{equation}
and for any $t \geq 1$,
\begin{equation}
\label{eq:lowdev_kernel2}
\prob{ V^2 \leq \| \mu\|^2 + \paren[3]{ \sqrt{\frac{\tr \Sigma}{N}}- 4q_\Sigma(t) }_+^2 - 2 \| \mu \| q_\Sigma(t)} \leq 2e^{-t}\,,
\end{equation}
\end{corollary}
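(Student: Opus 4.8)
The plan is to establish the two-sided bound on $\ee{}{V_c}$ first, and then read off the probabilistic statements \eqref{eq:updev_kernel2}--\eqref{eq:lowdev_kernel2} as monotone consequences of Proposition~\ref{prop:chisqdev-kernel}, which is already proved. Throughout I would use the exact second-moment identity $\ee{}{V_c^2} = \tr\Sigma/N$: since $V_c$ is the norm of the centered empirical mean, whose covariance operator is $\Sigma/N$, the cross terms vanish by independence and centering, leaving $\ee{}{V_c^2} = N^{-2}\sum_k \ee{}{\norm{Z_k-\mu}^2} = \tr\Sigma/N$.

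The upper bound $\ee{}{V_c}\le \sqrt{\tr\Sigma/N}$ is just Jensen's inequality $\ee{}{V_c}\le\sqrt{\ee{}{V_c^2}}$, exactly as already noted inside the proof of Proposition~\ref{prop:chisqdev-kernel}. The substance is the lower bound, for which I would prove the equivalent statement $\sqrt{\ee{}{V_c^2}}-\ee{}{V_c}\le 2q_\Sigma(1)$. The tempting route, namely bounding $\var{V_c}$ and invoking $\sqrt{m^2+v}-m\le\sqrt{v}$, loses too much: integrating both deviation tails of $V_c$ yields a variance proxy with a constant strictly larger than needed, and in particular the prefactor $2$ in front of $q_\Sigma(1)$ cannot be recovered that way.

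Instead I would work directly with the layer-cake identity $\ee{}{V_c^2}=\int_0^\infty 2s\,\prob{V_c>s}\,ds$ and split the range at $s=\ee{}{V_c}$. On $[0,\ee{}{V_c}]$ I bound $\prob{V_c>s}\le 1$, contributing exactly $(\ee{}{V_c})^2$. On $(\ee{}{V_c},\infty)$ I parametrize $s=\ee{}{V_c}+q_\Sigma(t)$ and apply the upper-deviation bound \eqref{eq:devVc}, $\prob{V_c>\ee{}{V_c}+q_\Sigma(t)}\le e^{-t}$, so that the change of variables turns the remaining integral into $\int_0^\infty 2\paren{\ee{}{V_c}+q_\Sigma(t)}e^{-t}q_\Sigma'(t)\,dt$. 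Writing $q_\Sigma(t)=a\sqrt{t}+bt$ with $a,b$ read off from \eqref{eq:defqsigma-kernel}, the Gamma integrals $\int_0^\infty t^{k/2}e^{-t}\,dt$ (after one integration by parts, using $q_\Sigma(0)=0$) evaluate this remainder to $\paren{a\sqrt{\pi}+2b}\ee{}{V_c}+\paren{a^2+\tfrac{3\sqrt{\pi}}{2}ab+2b^2}$. The decisive observation is that, term by term, the linear coefficient obeys $a\sqrt{\pi}+2b\le 4(a+b)$ and the constant obeys $a^2+\tfrac{3\sqrt{\pi}}{2}ab+2b^2\le 4(a+b)^2$, since $\sqrt{\pi}<4$ and $\tfrac{3\sqrt{\pi}}{2}<8$. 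As $q_\Sigma(1)=a+b$, summing the two pieces gives $\ee{}{V_c^2}\le(\ee{}{V_c})^2+4q_\Sigma(1)\ee{}{V_c}+4q_\Sigma(1)^2=\paren{\ee{}{V_c}+2q_\Sigma(1)}^2$, and taking square roots yields the claimed lower bound. The main obstacle is exactly this constant-tracking: keeping the cross term $2\ee{}{V_c}\cdot(\cdots)$ produced by the split (rather than throwing it away as a pure variance argument would) is what makes the target constant $2$ attainable.

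Finally I would deduce \eqref{eq:updev_kernel2} and \eqref{eq:lowdev_kernel2} from \eqref{eq:updev_kernel} and \eqref{eq:lowdev_kernel} by monotonicity. For the upper tail, replacing $\ee{}{V_c}$ by the larger $\sqrt{\tr\Sigma/N}$ enlarges the threshold $(\ee{}{V_c}+q_\Sigma(t))^2$, hence shrinks the deviation event, so the bound $2e^{-t}$ persists. For the lower tail (where $t\geq 1$ is assumed), the just-proved lower bound on $\ee{}{V_c}$ together with $q_\Sigma(1)\le q_\Sigma(t)$ gives $\ee{}{V_c}-2q_\Sigma(t)\ge \sqrt{\tr\Sigma/N}-2q_\Sigma(1)-2q_\Sigma(t)\ge \sqrt{\tr\Sigma/N}-4q_\Sigma(t)$; since $x\mapsto (x)_+^2$ is nondecreasing, the threshold only decreases, which again preserves the probability bound, completing the argument.
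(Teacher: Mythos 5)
Your proof is correct, and its skeleton coincides with the paper's: Jensen's inequality for the upper bound on $\ee{}{V_c}$, a second-moment bound of the form $\ee{}{V_c^2}\le\paren{\ee{}{V_c}+2q_\Sigma(1)}^2$ obtained by integrating the upper-deviation inequality \eqref{eq:devVc}, and then a monotone substitution of the two-sided bound on $\ee{}{V_c}$ into \eqref{eq:updev_kernel} and \eqref{eq:lowdev_kernel} (with $q_\Sigma(1)\le q_\Sigma(t)$ for $t\ge 1$ handling the lower tail, exactly as in the paper). The only real difference is the device used to integrate the tail. The paper rewrites \eqref{eq:devVc} as an almost-sure domination $V_c\le\ee{}{V_c}+q_\Sigma(\xi)$, where $\xi$ is an exponential variable of parameter $1$, then squares, takes expectations, and uses a concavity/Jensen argument together with $\ee{}{\xi}=1$ and $\ee{}{\xi^2}=2$. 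You instead use the layer-cake formula, split at $\ee{}{V_c}$, and evaluate the resulting Gamma integrals explicitly. These are two presentations of the same computation: an integration by parts (using $q_\Sigma(0)=0$) shows that your split-integral bound equals $\ee[1]{\paren{\ee{}{V_c}+q_\Sigma(\xi)}^2}$ exactly, which is precisely the quantity the paper bounds. What your route buys is explicitness: the exact coefficients $a\sqrt{\pi}+2b$ and $a^2+\tfrac{3\sqrt{\pi}}{2}ab+2b^2$ make visible that the final constant $2$ in front of $q_\Sigma(1)$ has some slack. What the paper's coupling trick buys is economy: no special-function evaluations, and the same rounding to $\paren{\ee{}{V_c}+2q_\Sigma(1)}^2$ with fewer intermediate terms to track. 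Either argument is a valid proof of the corollary.
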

{\bf Remark.} To the expert reader, we want to point out that the above concentration estimates are sharper than
the Bernstein's concentration inequality for vector random variables due to \citet{PinSak86} (Corollary~1 there) and which
has found many uses in the recent literature on kernel methods. The reason is that in \citeauthor{PinSak86}'s result,
which concerns deviations of the centered process $V_c$,
the deviation term (in factor of $t$) for $V_c$ is proportional to $\sqrt{\tr \Sigma/N}$.
The inequality of \citeauthor{PinSak86} also only bounds upper deviations.

In contrast, in the above result, the
term $\sqrt{\tr \Sigma/N} = \e[1]{\norm{V_c}^2}^{\frac{1}{2}}$ appears with constant $1$,
and the main deviation term (in factor of $t$) only involves $\sqrt{\normop{\Sigma}/N}$, which is better by a factor of $1/\sqrt{\deff}$.
We also obtain the informative lower deviation bound~\eqref{eq:lowdev_kernel2}.

To summarize, \citet{PinSak86}'s inequality controls the upper deviations of $V_c$ from zero in terms of a factor of its expectation,
while the above concentration inequalities
control the two-sided deviations of $V_c^2$ from its {\em expectation}, which is $\tr{\Sigma}/N$, in terms of a factor of its typical
deviation, which is $\normop{\Sigma}/N$.

This improvement makes the above bound first-order correct and mimic more closely the Gaussian chi-squared deviation
phenomenon of Proposition~\ref{prop:chisqdev}. This sharpness (and the fact that we get a control for two-sided deviations)
is crucial in order to be able to capture the behavior of the effective dimension,
see in particular Proposition~\ref{prop:ustatdev} below for the analysis of the MMD U-statistic, for which
the exact cancellation of the first order terms is paramount.

\begin{proof}
  The upper bound of the mean of $V_c$ is given directly by Jensen's inequality. For the lower bound, we can rewrite
  Talagrand's inequality \eqref{eq:devVc} equivalently under the following form: there exists
  $\xi$, an exponential random variable of parameter $1$, such that almost surely
  \begin{equation*}
    V_c \leq \ee{}{V_c} + q_\Sigma(\xi) = \ee{}{V_c} + \alpha \sqrt{\xi } + \beta \xi,
  \end{equation*}
  where $ \alpha$ and $\beta$ are given by \eqref{eq:defqsigma-kernel}. Taking the square and then the mean gives :
  \begin{align*}
    \ee{}{V_c^2} &\leq \ee{}{ \left( \ee{}{V_c} + \alpha \sqrt{\xi } + \beta \xi \right)^2} \\
    & \leq  \ee{}{  \left( \ee{}{V_c} + (\alpha+ \beta) \sqrt{\xi }  \right)^2 + 2 (\alpha+ \beta) \ee{}{V_c} \xi + (\alpha+ \beta)^2\xi^2 }\,.
  \end{align*}
  We can use now the concavity of the function $\xi \mapsto \left( \ee{}{V_c} + (\alpha+ \beta) \sqrt{\xi }  \right)^2$ and Jensen's inequality, obtaining
  \begin{align*}
    \ee{}{V_c^2} \leq  \left( \ee{}{V_c} + (\alpha+ \beta)\right)^2 + 2 (\alpha+ \beta) \ee{}{V_c} + 2(\alpha+ \beta)^2  \leq \left( \ee{}{V_c} + 2(\alpha+ \beta)\right)^2\,.
  \end{align*}
  Because $\ee{}{V_c^2} = \tr \Sigma/N$, and $(\alpha+\beta)=q_\Sigma(1)$ by definition, we obtain that
  \begin{equation*}
    \ee{}{V_c} \geq \sqrt{\frac{\tr \Sigma}{N}} -2q_\Sigma(1). 
  \end{equation*}
  If $t\geq 1$, it holds $q(t) \geq q(1)$ and we can plug in the above estimates for $\e{V_c}$ into~\eqref{eq:updev_kernel} and~\eqref{eq:lowdev_kernel}
  to obtain~\eqref{eq:updev_kernel2} and~\eqref{eq:lowdev_kernel2}, respectively (note that the condition $t\geq 1$ is only needed for the
  lower devation bound).
\end{proof}

\begin{proposition}
  \label{prop:scalarproddev-kernel}
Let $(X_k)_{1\leq k \leq N}\stackrel{i.i.d.}{\sim} X$ and $(Y_k)_{1 \leq k \leq N}\stackrel{i.i.d.}{\sim} Y$ be independent families of centered random variables bounded by $L$ in a separable Hilbert space $\cH$. Let $\Sigma_X$ and $\Sigma_Y$ be their respective covariance operators, $\msenaive^2$ and $\deff$ such that
\begin{gather*}
  \max(\tr \Sigma_X, \tr \Sigma_Y)/N \leq \msenaive^2\,; \\
  \min\left(\frac{\tr \Sigma_X}{ \|\Sigma_X\|_{op}}, \frac{ \tr \Sigma_Y}{ \|\Sigma_Y\|_{op}}\right) \geq \deff.
\end{gather*}
 Then for any $t\geq 0$:
\begin{equation}
\label{eq:devscprod-kernel}
\prob{ \inner{\frac{1}{N} \sum_{k=1}^N X_k ,\frac{1}{N} \sum_{k=1}^N Y_k} \geq 20q(t)\max(\msenaive , q(t))} \leq 6e^{-t},
\end{equation}
where
\begin{equation}
    \label{eq:defq-kernel}
  q(t) =  2 \sqrt{\left(\frac{4\msenaive^2}{\deff}+ 16L\frac{\sqrt{2\msenaive^2}}{N} \right)t} + \frac{2L}{N}t.
\end{equation}
\end{proposition}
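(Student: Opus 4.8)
The plan is to reduce the bound on the inner product $\inner{\bar X,\bar Y}$, where $\bar X := \frac1N\sum_k X_k$ and $\bar Y := \frac1N\sum_k Y_k$, to the one-sided norm concentration bounds of Corollary~\ref{cor:meanbounds}, via a polarization identity. The naive route through Cauchy--Schwarz, $\inner{\bar X,\bar Y}\leq \norm{\bar X}\norm{\bar Y}$, fails: since both norms concentrate around $\msenaive$, it only yields a bound of order $\msenaive^2$, which does not vanish with the deviation level and is far too weak. The point is that $\bar X$ and $\bar Y$ are independent and centered, so $\inner{\bar X,\bar Y}$ has mean zero and must be of the much smaller order of a deviation term times $\msenaive$.

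First I would introduce the i.i.d. variables $W_k := X_k + Y_k$, which are centered, bounded by $2L$, and have covariance operator $\Sigma_W = \Sigma_X + \Sigma_Y$ (the cross terms vanishing by independence of the two families). Setting $\bar W := \bar X + \bar Y$, polarization gives
\[
  \inner{\bar X,\bar Y} = \tfrac12\paren[1]{\norm{\bar W}^2 - \norm{\bar X}^2 - \norm{\bar Y}^2}.
\]
Since all three variables are centered, $\e{\norm{\bar W}^2} = \tr\Sigma_W/N = \tr\Sigma_X/N + \tr\Sigma_Y/N = \e{\norm{\bar X}^2} + \e{\norm{\bar Y}^2}$, so the leading (``first order'') terms cancel exactly in expectation. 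Next I would apply Corollary~\ref{cor:meanbounds} three times with $\mu=0$: the upper bound~\eqref{eq:updev_kernel2} to $\norm{\bar W}^2$ and the lower bound~\eqref{eq:lowdev_kernel2} to each of $\norm{\bar X}^2$ and $\norm{\bar Y}^2$. Each holds with probability at least $1-2e^{-t}$, so a union bound produces the stated failure probability $6e^{-t}$. On the intersection of these events, $\norm{\bar W}^2 \leq (\sqrt{\tr\Sigma_W/N}+q_{\Sigma_W}(t))^2$ and $\norm{\bar X}^2 \geq \big(\sqrt{\tr\Sigma_X/N}-4q_{\Sigma_X}(t)\big)_+^2 \geq \tr\Sigma_X/N - 8\msenaive\, q_{\Sigma_X}(t)$, and likewise for $\bar Y$.

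Substituting into the polarization identity, the $\tr\Sigma_X/N$ and $\tr\Sigma_Y/N$ contributions cancel against those in $\tr\Sigma_W/N$, leaving only deviation terms:
\[
  \inner{\bar X,\bar Y} \leq \sqrt2\,\msenaive\, q_{\Sigma_W}(t) + \tfrac12 q_{\Sigma_W}(t)^2 + 4\msenaive\, q_{\Sigma_X}(t) + 4\msenaive\, q_{\Sigma_Y}(t).
\]
Finally, using the hypotheses $\tr\Sigma_X/N,\tr\Sigma_Y/N \leq \msenaive^2$ together with $\normop{\Sigma_X} \leq \tr\Sigma_X/\deff \leq N\msenaive^2/\deff$ (and likewise for $Y$), one checks from~\eqref{eq:defqsigma-kernel} that each of $q_{\Sigma_X}(t), q_{\Sigma_Y}(t)$ is at most the $q(t)$ of~\eqref{eq:defq-kernel} and that $q_{\Sigma_W}(t) \leq 2q(t)$ (the extra factor coming from the doubled trace, operator norm, and boundedness constant of $W$). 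Bounding $\msenaive\, q(t) \leq q(t)\max(\msenaive,q(t))$ and $q(t)^2 \leq q(t)\max(\msenaive,q(t))$ then collapses the right-hand side to a numerical multiple of $q(t)\max(\msenaive,q(t))$, the crude constant $20$ comfortably absorbing all the factors above.

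The main obstacle---and the reason the sharp two-sided Corollary~\ref{cor:meanbounds} is indispensable rather than a classical vector Bernstein inequality---is the exact cancellation of the first-order $\tr\Sigma/N$ terms. This requires the leading term in the norm deviation to appear with constant exactly $1$ in both the upper bound for $\norm{\bar W}$ and the lower bounds for $\norm{\bar X},\norm{\bar Y}$; any mismatch would leave an uncancelled residual of order $\msenaive^2$, which is precisely the failure mode of the Cauchy--Schwarz approach. One minor technical point to watch is that the lower deviation bound~\eqref{eq:lowdev_kernel2} is stated for $t\geq 1$, so the argument as written delivers the conclusion in that range, which is the regime in which the proposition is subsequently used.
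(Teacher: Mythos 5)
Your proposal is correct and follows essentially the same route as the paper's own proof: the polarization identity $\inner[1]{\ol{X},\ol{Y}} = \tfrac12\paren[1]{\norm[0]{\ol{X}+\ol{Y}}^2 - \norm[0]{\ol{X}}^2 - \norm[0]{\ol{Y}}^2}$ for the two sample averages, three applications of Corollary~\ref{cor:meanbounds} (upper deviation~\eqref{eq:updev_kernel2} for the sum, lower deviation~\eqref{eq:lowdev_kernel2} for each term, all with $\mu=0$), a union bound giving $6e^{-t}$, exact cancellation of the $\tr\Sigma/N$ terms, and absorption of the remaining deviation terms into $20q(t)\max(\msenaive,q(t))$. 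If anything, your explicit check that $q_{\Sigma_X}(t),q_{\Sigma_Y}(t)\leq q(t)$ while $q_{\Sigma_W}(t)\leq 2q(t)$ (since $X+Y$ is bounded by $2L$ with doubled trace and operator norm) is slightly more careful than the paper's write-up, which uses a single $q(t)$ for all three bounds and relies on the slack in the final constant to cover this.
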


\begin{proof}
  Let us remark that
  \begin{equation*}
    \inner{\frac{1}{N} \sum_{k=1}^N X_k ,\frac{1}{N} \sum_{k=1}^N Y_k} = \frac{1}{2N^2} \left[ \left\| \sum_{k=1}^{N}X_k + Y_k \right\|^2 -  \left\| \sum_{k=1}^{N}X_k \right\|^2 -  \left\| \sum_{k=1}^{N}Y_k \right\|^2 \right] \,.
  \end{equation*}
  So, by Corollary~\ref{cor:meanbounds}, with probability greater than $1-6e^{-t}$, for $t \geq 1$, and using $(a-b)^2_+ \geq a^2 -2ab$:
  \begin{align*}
    2\inner{\frac{1}{N} \sum_{k=1}^N X_k ,\frac{1}{N} \sum_{k=1}^N Y_k} &\leq  \left(\sqrt{\frac{\tr \Sigma_X+ \tr \Sigma_Y}{N}} +q(t) \right)^2 - \left(\sqrt{\frac{\tr \Sigma_X}{N}} -4q(t) \right)_+^2 \\
    & \;\;\;\;\;\; - \left(\sqrt{\frac{\tr \Sigma_Y}{N}} -4q(t) \right)_+^2 \\
    & \leq q(t)(19\msenaive + q(t)) \leq 20q(t)\max(\msenaive,q(t)).
  \end{align*}
\end{proof}

\begin{proposition}
    \label{prop:ustatdev}
Let $(X_k)_{1\leq i \leq N}\stackrel{i.i.d.}{\sim} X$ and $(Y_k)_{1 \leq i \leq N}\stackrel{i.i.d.}{\sim} Y$ be independent families of random variables bounded by $L$ in $\cH$. Let $\mu_x,\Sigma_X$ and $\mu_Y,\Sigma_Y$ denote their respective means and covariance operators. Let $U$ the statistic defined as
\begin{equation}
    \label{eq:Ustat}
  U = \frac{1}{N(N-1)} \sum_{\substack{k,\ell =1\\k\neq \ell}}^{N} \inner{X_k,X_\ell}_\cH - \frac{2}{N^2} \sum_{k,\ell=1}^{N} \inner{X_k, Y_\ell}_\cH +  \frac{1}{N(N-1)} \sum_{\substack{k,\ell =1\\k\neq \ell}}^{N} \inner{Y_k,Y_\ell}_\cH.
\end{equation}
Then for any $t\geq 1$, $N \geq 2$:
\begin{equation}\label{eq:updev-Ustat}
  \prob{ U \geq \| \mu_X - \mu_Y \|^2 + 2\| \mu_X - \mu_Y \|q(t)  + 2\sqrt{2\msenaive^2}q(t) + 11 q^2(t) } \leq 8e^{-t},
\end{equation}
and
\begin{equation}\label{eq:lowdev-Ustat}
  \prob{ U \leq \| \mu_X - \mu_Y \|^2 - 2\| \mu_X - \mu_Y \|q(t)  - 8\sqrt{2\msenaive^2}q(t) - 32q^2(t)} \leq 8e^{-t},
\end{equation}
where $q(t)$ is given by~\eqref{eq:defq-kernel}.
\end{proposition}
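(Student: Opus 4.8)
The plan is to reduce the two-sided control of the U-statistic to the sharp squared-norm bounds of Corollary~\ref{cor:meanbounds} together with a scalar Bernstein inequality; the whole difficulty lies in preserving an exact cancellation of first-order, $\msenaive^2$-sized terms.

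The starting point is a purely algebraic identity. Writing $\ol X := \frac1N\sum_{k=1}^N X_k$ and $\ol Y := \frac1N\sum_{k=1}^N Y_k$, and replacing each off-diagonal sum $\sum_{k\neq\ell}$ by a full sum minus its diagonal, one simplifies $U$ to
\begin{equation*}
  U = \norm{\ol X - \ol Y}^2 - R_X - R_Y, \qquad R_X := \frac{1}{N-1}\paren[2]{\frac1N\sum_{k=1}^N\norm{X_k}^2 - \norm{\ol X}^2},
\end{equation*}
and likewise for $R_Y$. Three features make this the right identity: $R_X,R_Y\ge 0$ (Jensen); they are translation invariant, so $R_X = \frac{1}{N-1}(\frac1N\sum_k\norm{X_k-\mu_X}^2 - \norm{\ol X-\mu_X}^2)$ with $\e{R_X}=\tr\Sigma_X/N$; and $\ol X-\ol Y = \frac1N\sum_k W_k$ is the empirical mean of the i.i.d.\ variables $W_k:=X_k-Y_k$, bounded by $2L$, with mean $\mu_X-\mu_Y$ and covariance $\Sigma_X+\Sigma_Y$. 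Since $\e{\norm{\ol X-\ol Y}^2}=\norm{\mu_X-\mu_Y}^2+\tr(\Sigma_X+\Sigma_Y)/N$ while $\e{R_X+R_Y}=\tr(\Sigma_X+\Sigma_Y)/N$, the first-order terms of the two pieces are engineered to annihilate, leaving $\e U=\norm{\mu_X-\mu_Y}^2$.

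I would then bound each piece with its own tool. Applying Corollary~\ref{cor:meanbounds} to $\norm{\ol X-\ol Y}^2$ (taking $Z_k=W_k$, $\mu=\mu_X-\mu_Y$, $\Sigma=\Sigma_X+\Sigma_Y$, a.s.\ bound $2L$), the upper estimate~\eqref{eq:updev_kernel2} yields, besides $\norm{\mu_X-\mu_Y}^2$ and the first-order $\tr(\Sigma_X+\Sigma_Y)/N$, a contribution $2\norm{\mu_X-\mu_Y}\,q_{\Sigma_X+\Sigma_Y}(t)$ — here the \emph{linear} part of $U$ is already absorbed, needing no separate treatment — a contribution $2\sqrt{\tr(\Sigma_X+\Sigma_Y)/N}\,q_{\Sigma_X+\Sigma_Y}(t)\le 2\sqrt{2\msenaive^2}\,q_{\Sigma_X+\Sigma_Y}(t)$, and a second-order remainder $q_{\Sigma_X+\Sigma_Y}^2(t)$. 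For the corrections I would lower-bound $R_X$ by controlling $\frac1N\sum_k\norm{X_k-\mu_X}^2$ from below with a scalar Bernstein inequality (range $\le 4L^2$, variance $\le 4L^2\tr\Sigma_X$) and $\norm{\ol X-\mu_X}^2$ from above by Corollary~\ref{cor:meanbounds} with $\mu=0$; divided by $N-1$, each such fluctuation is of second order. Subtracting, the $\tr(\Sigma_X+\Sigma_Y)/N$ terms cancel and~\eqref{eq:updev-Ustat} drops out. The lower bound~\eqref{eq:lowdev-Ustat} is the mirror image, using the lower estimate~\eqref{eq:lowdev_kernel2} on $\norm{\ol X-\ol Y}^2$ and, symmetrically, the lower estimate (with $\mu=0$) on $\norm{\ol X-\mu_X}^2$ together with Bernstein from above on the diagonal sums to upper-bound $R_X,R_Y$.

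The numerical asymmetry — constants $2,11$ in~\eqref{eq:updev-Ustat} against $8,32$ in~\eqref{eq:lowdev-Ustat} — is inherited verbatim from Corollary~\ref{cor:meanbounds}, whose deviation coefficient inside the square is $1$ in~\eqref{eq:updev_kernel2} but $4$ in~\eqref{eq:lowdev_kernel2}; the elementary $(a-4b)_+^2\ge a^2-8ab$ turns this $4$ into the doubled constant $8$. A union bound over the five invocations (one Corollary for $\norm{\ol X-\ol Y}^2$, one for each of $\norm{\ol X-\mu_X}^2$ and $\norm{\ol Y-\mu_Y}^2$, at $2e^{-t}$ apiece, and one Bernstein at $e^{-t}$ for each diagonal sum) gives exactly the stated $8e^{-t}$. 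The one genuinely delicate point — and the reason the sharp Corollary is indispensable — is the cancellation of the first-order $\tr\Sigma/N\sim\msenaive^2$ terms: because Corollary~\ref{cor:meanbounds} is \emph{first-order correct} (coefficient $1$ on $\sqrt{\tr\Sigma/N}$, as stressed in the Remark following it), the mean of $\norm{\ol X-\ol Y}^2$ is matched on the nose by $\e{R_X+R_Y}$, so only the second-order scale $\msenaive q+q^2$ survives; a cruder vectorial Bernstein inequality would leave an uncancelled $\msenaive^2$ and ruin the estimate. What remains is the routine bookkeeping of checking that every deviation term — from the two corrections and from the several instances of $q_\Sigma(t)$ arising with covariances $\Sigma_X,\Sigma_Y,\Sigma_X+\Sigma_Y$ and a.s.\ bounds up to $2L$ — is dominated by the advertised multiple of the single reference scale $q(t)$ of~\eqref{eq:defq-kernel}, via $\tr\Sigma_\bullet/N\le\msenaive^2$ and $\normop{\Sigma_\bullet}/N\le\msenaive^2/\deff$ from~\eqref{eq:deffunif}.
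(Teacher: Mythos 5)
Your proposal is correct and follows essentially the same route as the paper's own proof: the identical algebraic decomposition $U = \norm[1]{\ol{X} - \ol{Y}}^2 + \frac{1}{N-1}H$ (your $-R_X-R_Y$), Corollary~\ref{cor:meanbounds} applied to the three empirical-mean norms (with $X_k-Y_k$ treated as i.i.d.\ variables bounded by $2L$ with covariance $\Sigma_X+\Sigma_Y$), scalar Bernstein on the diagonal sums, the same exact cancellation of the first-order $\tr\Sigma/N$ terms, and the same union-bound accounting $3\times 2e^{-t}+2\times e^{-t}=8e^{-t}$. The only difference is cosmetic (and marginally cleaner): you exploit translation invariance to write the corrections in centered form, invoking the Corollary with $\mu=0$ so that no $\norm{\mu_X}q(t)$ cross terms arise, whereas the paper bounds the uncentered quantities $\norm[1]{\ol{X}}^2$ and $\frac{1}{N}\sum_k\norm{X_k}^2$ and lets the $\norm{\mu_X}^2$ terms cancel across the two bounds, absorbing the leftover $2\norm{\mu_X}q(t)\leq 2Lq(t)$ into the $q^2(t)$ remainder.
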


\begin{proof}
Observe that
 \begin{align*}
    U &= \left\| \frac{1}{N} \sum_{k=1}^N X_k - \frac{1}{N} \sum_{k=1}^N Y_k \right\|^2 \\
    & \;\;\; + \frac{1}{N-1}\left( \left\| \frac{1}{N} \sum_{k=1}^N X_k \right\|^2 +  \left\| \frac{1}{N} \sum_{k=1}^N Y_k \right\|^2 - \frac{1}{N} \sum_{k=1}^N \|X_k \|^2 - \frac{1}{N} \sum_{k=1}^N \| Y_k\|^2 \right)\\
     &=: \left\| \frac{1}{N} \sum_{k=1}^N X_k - \frac{1}{N} \sum_{k=1}^N Y_k \right\|^2 + \frac{1}{N-1}H \,.
 \end{align*}
 Using now the upper bound of Bernstein's inequality, since $\e[1]{\norm{X}^2} = \norm{\mu_X}^2 + \tr \Sigma_X$, with probability greater than $1-e^{-t}$
 it holds:
 \begin{equation*}
   \frac{1}{N} \sum_{k=1}^N \|X_k \|^2  \geq  \tr \Sigma_X + \|\mu_X\|^2  -\sqrt{2L^2 \msenaive^2t} - \frac{2L^2t}{3N} \,.
 \end{equation*}
 So using~\eqref{eq:updev_kernel2} (twice), with probability greater than $1- 6e^{-t}$ :
 \begin{align*}
 H \leq& \| \mu_X \|^2 + 2\| \mu_X \|q(t) + \paren[3]{\sqrt{ \frac{\tr \Sigma_X}{N}} + q(t) }^2 +  \| \mu_Y \|^2 + 2\| \mu_Y \|q(t) + \paren[3]{ \sqrt{ \frac{\tr \Sigma_Y}{N}} + q(t) }^2 \\
 & \qquad - \tr \Sigma_X - \|\mu_X\|^2  +\sqrt{2L^2 \msenaive^2t} + \frac{2L^2t}{3N}  - \tr \Sigma_Y - \|\mu_Y\|^2  +\sqrt{2L^2 \msenaive^2t} + \frac{2L^2t}{3N} \\
  \leq& - (N-1)/N \Big( \tr \Sigma_X + \tr \Sigma_Y\Big) + 4Lq(t) +   4\sqrt{\msenaive^2}q(t) + 2 q^2(t) +2\sqrt{2L^2 \msenaive^2t} + \frac{4L^2t}{3N} \\
  \leq &  - (N-1)/N \Big( \tr \Sigma_X + \tr \Sigma_Y\Big) + (2+4N)q^2(t)\,.
 \end{align*}
Using again~\eqref{eq:updev_kernel2}, and $N\geq 2$, with probability greater than  $1- 8e^{-t}$ :
 \begin{align*}
   U &\leq \| \mu_X - \mu_Y \|^2 + 2\| \mu_X - \mu_Y \|q(t) + \paren[3]{ \sqrt{ \frac{\tr \Sigma_X + \tr \Sigma_Y}{N}} + q(t) }^2 -  \frac{ \tr \Sigma_X + \tr \Sigma_Y}{N} + 10q^2(t)  \\
    & \leq \| \mu_X - \mu_Y \|^2 + 2\| \mu_X - \mu_Y \|q(t)  + 2\sqrt{2\msenaive^2}q(t) + 11q^2(t)\,,
 \end{align*}
 which is~\eqref{eq:updev-Ustat}.

 We proceed similarly for lower deviations of $U$: using again Bernstein's inequality and~\eqref{eq:lowdev_kernel2}, with probability greater than $1- 6e^{-t}$, and using $(a-b)_+^2 \geq a^2 - 2ab$:
 \begin{align*}
 H & \geq \| \mu_X \|^2 - 2\| \mu_X \|q(t) + \paren[3]{ \sqrt{ \frac{\tr \Sigma_X}{N}} - 4q(t) }_+^2 +  \| \mu_Y \|^2 - 2\| \mu_Y \|q(t) + \paren[3]{ \sqrt{ \frac{\tr \Sigma_Y}{N}} - 4q(t) }_+^2 \\
 & \qquad -\tr \Sigma_X - \|\mu_X\|^2  -\sqrt{2L^2 \msenaive^2t} - \frac{2L^2t}{3N} - \tr \Sigma_Y - \|\mu_Y\|^2  -\sqrt{2L^2 \msenaive^2t} - \frac{2L^2t}{3N} \\
  & \geq - (N-1)/N \paren{ \tr \Sigma_X + \tr \Sigma_Y} -16Nq^2(t)\,,
 \end{align*}
 which implies, using again~\eqref{eq:lowdev_kernel2}, and $N\geq 2$, that with probability greater than  $1- 8e^{-t}$ it holds:
  \begin{align*}
   U &\geq \| \mu_X - \mu_Y \|^2 - 2\| \mu_X - \mu_Y \|q(t) + \paren[3]{ \sqrt{ \frac{\tr \Sigma_X + \tr \Sigma_Y}{N}} - 4q(t) }_+^2 -  \frac{ \tr \Sigma_X + \tr \Sigma_Y}{N} - 16q^2(t) \\
     &\geq \| \mu_X - \mu_Y \|^2 - 2\| \mu_X - \mu_Y \|q(t)  - 8\sqrt{2\msenaive^2}q(t) - 32q^2(t)\,,
  \end{align*}
  which is~\eqref{eq:lowdev-Ustat}.

\end{proof}

\section{Details on the Tested Methods in the Numerical Experiments}\label{apx:testedMethods}
In the following, the methods that are tested in the experiments are described in more detail.
Recall, that $V_i := \set{j: T_{ij}=1, j \in \intr{B}}$ and let $T_{ij}$ be defined as in Eq \eqref{eq:simtest_kme}, i.e. $V_i$ holds the neighboring kernel means of bag $i$.
All of the methods give KME estimations of the form
\begin{equation*}
\tilde{\mu}_i := \sum_{j \in \intr{B}} \omega_{ij} \cdot \naive_j,
\end{equation*}
where the definition of the weighting $w_{ij}$ depends on the applied method.
\begin{enumerate}
\item \NE{} considers each bag individually. Therefore, the weighting is simply
\begin{equation*}
\omega_{ij} = \begin{cases}
1, &\text{for   } i = j \\
0, &\text{otherwise.}
\end{cases}
\end{equation*}

\item \RKMSE{} was proposed by \citet{muandet2016stein}.
It estimates each KME individually but shrinks it towards 0.
The amount of shrinkage depends on the data and is defined as
\begin{equation*}
\omega_{ij} = \begin{cases}
1-\frac{\lambda}{1+\lambda}, &\text{for   } i = j \\
0, &\text{otherwise}
\end{cases}
\end{equation*}
where
\begin{equation*}
\lambda = \frac{\varrho - \rho}{(\nicefrac{1}{N_b} - 1)\varrho + (N_b - 1)\rho}
\end{equation*}
with $\varrho = \nicefrac{1}{N_i} \sum_{k=1}^{N_i} k(Z_k^{(i)},Z_k^{(i)})$ and
$\rho = \nicefrac{1}{N_i^2} \sum_{k,\ell=1}^{N_i} k(Z_k^{(i)},Z_{\ell}^{(i)})$.

\item \STBnorm{} is described in Eq. \eqref{eq:gammaest} with $\gamma$ set to $0$, i.e.
\begin{equation*}
\omega_{ij} = \begin{cases}
\frac{1}{\vert V_i \vert}, &\text{for   } j \in V_i \\
0, &\text{otherwise.}
\end{cases}
\end{equation*}

\item \STBtheory{} is defined by Eq. \eqref{eq:gammaest}.
It uses the optimal value for $\gamma$ as described in Eq. \eqref{eq:optgamma} that was proven to be optimal.
Here, $\tau$ is replaced by its empirical counterpart $\zeta$ and another multiplicative constant $c>0$ was added to allow for more flexibility.
Its specific value must be found using model optimization.
\begin{equation*}
\omega_{ij} = \begin{cases}
\gamma + \frac{1-\gamma_i}{\vert V_i \vert}, &\text{for   } i = j \\
\frac{1-\gamma_i}{\vert V_i \vert}, &\text{for   } i \neq j,  j \in V_i \\
0, &\text{otherwise}
\end{cases}
\end{equation*}
with
\begin{equation*}
\gamma_i = \frac{c \cdot \zeta \cdot (\vert V_i \vert - 1)}{(1 + c \cdot \zeta) \cdot (\vert V_i \vert - 1) + 1}.
\end{equation*}

\item \STBweight{} is also described by Eq. \eqref{eq:gammaest} but the optimal value of $\gamma$ is found by model optimization
\begin{equation*}
\omega_{ij} = \begin{cases}
\gamma + \frac{1-\gamma}{\vert V_i \vert}, &\text{for   } i = j \\
\frac{1-\gamma}{\vert V_i \vert}, &\text{for   } i \neq j,  j \in V_i \\
0, &\text{otherwise.}
\end{cases}
\end{equation*}

\item \MTAconst{} is based on a multi-task averaging approach described in \citet{feldman2014revisiting} which we translated to the KME framework as
\begin{equation}\label{eq:mtaweight}
\omega_{ij} = \left({\left(I + \frac{\gamma}{B} D \cdot L(A)\right)}^{-1}\right)_{ij}.
\end{equation}
Here, $D = \text{diag}\left( (E_i)_{i \in \intr{B}}\right)$ as defined in Eq. \eqref{eq:esttaskvar} and $L(A)$ denotes the graph Laplacian of task-similarity matrix $A$.
For \MTAconst{} the similarity is assumed to be constant, i.e.
$A = a \cdot (\mathbf{1} \mathbf{1}^T)$
with $a = \frac{1}{B (B-1)} \sum_{i,j \in \intr{B}} \norm{\naive_i -\naive_j}^2_\mathcal{H}$.
Again, the optimal value for $\gamma$ must be found using model optimization.

\item \MTAstb{} is defined as in Eq. \eqref{eq:mtaweight}.
In contrast to \MTAconst, the similarity matrix $A$ is defined as
\begin{equation*}
A_{ij} = \begin{cases}
1, &\text{for   } j \in V_i \\
0, &\text{otherwise.}
\end{cases}
\end{equation*}
\end{enumerate}
The methods \STBnorm, \STBweight, \STBtheory{} and \MTAstb{} use all the similarity test defined by $T_{ij}$ which depends on $\zeta$.
Nevertheless, the optimal value for $\zeta$ is found by model optimization for each method individually.

\section{Numerical Results in the Gaussian Setting}
\label{apx:gaussresults}
In this section we report numerical comparisons of the proposed approaches in the idealized Gaussian setting~\eqref{eq:Gauss}. In that setting, since the tests and proposed estimates
only depend on the naive estimators, we can reduce each bag to its naive estimator, in other words we
can assume $N=1$ (only one observation per bag). We consider the following models for the
means $(\mu_i)_{i \in \intr{B}}$ (in each case the number of bags is $B=2000$):
\begin{itemize}
  \item Model {\bf UNIF}: ambient dimension $d=1000$, the means $(\mu_i)_{i \in \intr{B}}$ are distributed uniformly
    over the lower-dimensional cube $[-20,20]^{d'}$, $d'=10$ (the remaining coordinates are set to 0).
  \item Model {\bf CLUSTER}:  ambient dimension $d=1000$, the means are clustered in 20 clusters of centers
    $(m_i)_{i \in \intr{10}}$, drawn as $\cN(0,I_d)$, in each cluster the means are drawn as
    Gaussians $\cN(m_i,0.1 * I_d)$, 
  \item Model {\bf SPHERE}: ambient dimension $d=1000$, the 6 first coordinates of the means are distributed uniformly on the sphere of radius 50 in $\mbr^6$, the rest are set to 0.
  \item Model {\bf SPARSE}: ambient dimension $d=50$, the means are 2-sparse vectors with two random
    coordinates distributed as $\mathrm{Unif}[0,20]$.
  \end{itemize}
  In each case, we first select the parameter for the tests (parameter~$\zeta$ in~(\ref{eq:deftestsgaus}) )
  from the oracle \STBnorm\ performance. This value is held fixed and the shrinkage parameter in
  methods \MTAstb, \STBtheory, \STBweight\ is again determined as its ``oracle'' value by minimization over
  the squared error, as done in the KME experiments.

    For comparison, we also display the results of the classical positive-part James-Stein estimator (\PPJS, \citealp{baranchik1970family}),  which is a shrinkage estimator applied separately on each bag. It has no tuning parameter.
  \begin{table}[hb]
    \caption{Decrease in averaged squared estimation error compared to \NE{} in percent on the Gaussian data (higher is better).
    Averaged results over 20 trials. Standard error of one given trial is of order $5.10^{-3}$.}
    \label{tab:gaussian_results}
    \begin{center}
  \begin{tabular}{lcccccc}
    & \PPJS & \MTAconst & \MTAstb & \STBnorm & \STBtheory & \STBweight\\
    \hline
    {\bf UNIF} & 0.439 & 0.427 & 0.653 & 0.796 & 0.813 & 0.813 \\
    {\bf CLUSTER} & 0.495 & 0.508 & 0.979 & 0.980 & 0.980 & 0.980 \\
    {\bf SPHERE} & 0.285 & 0.285 & 0.745 & 0.894 & 0.898 & 0.898 \\
{\bf SPARSE} & 0.224 & 0.162 & 0.367 & 0.402 & 0.441 & 0.443
  \end{tabular}
\end{center}
\end{table}

\vfill
\pagebreak
\bibliography{references}

\end{document}